\documentclass[11pt]{article}
\usepackage[OT4]{fontenc}
\usepackage[margin=1.2in]{geometry}
\usepackage{amssymb,amsfonts,amsmath,amsthm,amscd,dsfont,mathrsfs,pifont}
\usepackage{blkarray}
\usepackage{graphicx,float,psfrag,epsfig,color}
\usepackage{qtree}
\usepackage{comment}
\usepackage[dvipsnames]{xcolor}
\usepackage{authblk,textcomp,xcolor}

\newcommand{\CPMacro}{\CP(\orb(\overline{f_n}), \cU^N)}


\usepackage[pdftex,pagebackref=true,colorlinks]{hyperref}
\hypersetup{linkcolor=[rgb]{.7,0,0}}
\hypersetup{citecolor=[rgb]{0,.7,0}}
\hypersetup{urlcolor=[rgb]{.7,0,.7}}

\footnotesep 14pt
\floatsep 27pt plus 2pt minus 4pt      
\textfloatsep 40pt plus 2pt minus 4pt
\intextsep 27pt plus 4pt minus 4pt

\newcommand\independent{\protect\mathpalette{\protect\independent}{\perp}}
\def\independent#1#2{\mathrel{\rlap{$#1#2$}\mkern2mu{#1#2}}}


\theoremstyle{plain}
\newtheorem{definition}{Definition}
\theoremstyle{plain}
\newtheorem{theorem}{Theorem}
\theoremstyle{plain}
\theoremstyle{plain}
\theoremstyle{plain}
\theoremstyle{plain}
\newtheorem{proposition}{Proposition}
\theoremstyle{plain}
\theoremstyle{plain}
\newtheorem{lemma}{Lemma}
\theoremstyle{plain}
\newtheorem{corollary}{Corollary}
\theoremstyle{plain}
\theoremstyle{remark}
\newtheorem{remark}{Remark}
\theoremstyle{remark}
\theoremstyle{plain}


\hyphenation{op-tical net-works semi-conduc-tor}

\definecolor{DSgray}{cmyk}{0,0,0,0.7}
\definecolor{DSred}{cmyk}{0,0.7,0,0.7}

\newcommand{\pr}{\mathbb{P}}
\newcommand{\E}{\mathbb{E}}
\DeclareMathOperator{\CP}{CP}

\newcommand{\Prr}{\mathop{\rm Pr}\nolimits}

\DeclareMathOperator{\ReLU}{ReLU}
\DeclareMathOperator{\Rad}{Rad}
\DeclareMathOperator{\INAL}{INAL}
\DeclareMathOperator{\sign}{sgn}
\DeclareMathOperator{\erf}{erf}

\DeclareMathOperator{\orb}{orb}

\DeclareMathOperator{\NN}{NN}
\DeclareMathOperator{\poly}{poly}
\newcommand{\cF}{\mathcal{F}}
\newcommand{\cX}{\mathcal{X}}
\newcommand{\cU}{\mathcal{U}}

\newcommand{\cN}{\mathcal{N}}
\newcommand{\cP}{\mathcal{P}}
\newcommand{\bR}{\mathbb{R}}
\newcommand{\bN}{\mathbb{N}}

\usepackage[capitalize,noabbrev]{cleveref}

\usepackage{dsfont}

\usepackage{hyperref}
\hypersetup{pdftitle={OnlineLearning},%
            colorlinks, linktocpage=true, pdfstartpage=1, pdfstartview=FitV,%
    breaklinks=true, pdfpagemode=UseNone, pageanchor=true, pdfpagemode=UseOutlines,%
    plainpages=false, bookmarksnumbered, bookmarksopen=true, bookmarksopenlevel=1,%
    hypertexnames=true, pdfhighlight=/O,%
    urlcolor=orange, citecolor=RoyalBlue}

\author{Emmanuel Abbe}
\author{Elisabetta Cornacchia}
\author{Jan Hązła}
\author{Christopher Marquis}
\affil{\small Ecole Polytechnique F\'{e}d\'{e}rale de Lausanne (EPFL) \\ {\tt\{first.last\}@epfl.ch}}

\title{An initial alignment between neural network and target \\ is needed for gradient descent to learn}

\begin{document}

\maketitle
\begin{abstract}This paper introduces the
notion of ``Initial Alignment'' (INAL)
between a neural network at initialization and
a target function. It is proved that if a network and a Boolean target function do
not have a noticeable INAL,
then noisy gradient descent on a fully connected network with normalized
i.i.d.\ initialization
will not learn in polynomial time. Thus a certain amount of knowledge about the target (measured by the INAL) is needed in the architecture design. This also provides an answer to an open problem posed in~\cite{AS20}. The results are based on deriving lower-bounds for descent algorithms on symmetric neural networks without explicit knowledge of the target function beyond its INAL.
\end{abstract}

\section{Introduction}
Does one need an educated guess on the type of architecture needed in order for gradient descent to learn certain target functions? Convolutional neural networks (CNNs) have an architecture that is natural for learning functions having to do with image features: at initialization, a CNN is already well posed to pick up correlations with the image content due to its convolutional and pooling layers, and gradient descent (GD) allows to locate and amplify such correlations. However, a CNN may not be the right architecture for non-image based target functions, or even certain image-based functions that are non-classical \cite{failing}.
More generally, we raise the following question:
\begin{quote}
    Is a certain amount of `initial alignment' needed between a neural network at initialization and a target function in order for GD to learn on a reasonable horizon? Or could a neural net that is not properly designed but large enough find its own path to correlate with the target?
\end{quote}
In order to formalize the above question, one needs to define the notion of `alignment' as well as to quantify the `certain amount' and `reasonable horizon' notions. This paper focuses on the `polynomial-scaling' regime and on fully connected architectures,  but we conjecture that a more general quantitative picture can be derived. Before defining the question formally, we stress a few connections to related problems.

A different type of `gradual' question has recently been investigated for neural networks, namely, the `depth gradual correlation' hypothesis. This postulates that if a neural network of low depth (e.g., depth 2) cannot learn to a non-trivial accuracy after GD has converged, then an augmentation of the depth to a larger constant will not help in learning \cite{malach2019deeper, allen2020backward}. In contrast, the question studied here is more of a `time gradual correlation' hypothesis, saying that if at time zero GD cannot correlate non-trivially with a target function (i.e., if the neural net at time zero does not have an initial alignment), then a polynomial number of GD steps will not help.

From a lower-bound point of view, the question we ask is also slightly different than the traditional lower-bound questions posed in the learning literature that have to do with the difficulties of learning a class of functions irrespective of a specific architecture. For instance, it is known from \cite{blum1994weakly, kearns1998efficient} that the larger the statistical dimension of a function class is, the more challenging it is for a statistical query (SQ) algorithm to learn, and similarly for GD-like algorithms~\cite{abbe2021power}; these bounds hold irrespective of the type of neural network architectures used.


A more architecture-dependent lower-bound is derived in~\cite{abbe2020poly}, where the junk-flow is essentially used as replacement of the number of queries, and which depends on the type of architecture and initialization albeit being implicit. In~\cite{shalev2021computational}, a separation between fully connected and CNN architectures is obtained, showing that certain target functions have a locality property and are better learned by the latter architecture. In a different setting,~\cite{tan2021cautionary} gives a generalization lower bound for decision trees on additive generative models, proving that decision trees are statistically inefficient at estimating additive regression functions. However, none of the bounds in these works give an explicit figure of merit to measure the suitability of a neural network architecture for a target.

One can interpret such bounds, especially the one in \cite{abbe2020poly}, as follows. If the function class is such that for two functions $F,F'$ sampled randomly from the class, the typical correlation is not noticeable, i.e., if the cross-predictability (CP) is given by
\begin{align} \label{eq:CP_small}
   \CP(F,F'):= \E_{F,F'}  \langle F,F' \rangle^2=n^{-\omega_n(1)},
\end{align}
(where we denoted by $\langle .\rangle$ the $L^2$-scalar product, namely, for some input distribution $P_\cX$, $\langle f,g\rangle = \E_{x\sim P_\cX} [ f(x) g(x)] $ and by $\omega_n(1) $ any sequence that is diverging to $\infty$ as $n \to \infty$),
then GD with polynomial precision and on a polynomial horizon will not be able to identify the target function with an inverse polynomial accuracy (weak learning), because at no time the algorithm will approach a good approximation of the target function; i.e. the gradients stay essentially agnostic to the target.

Instead, here we focus on a specific function --- rather than a function class --- and on a specific architecture and initialization. One can engineer a function class from a specific function if the initial architecture has some distribution symmetry. In such case,
if the original function is learnable,
then its orbit under the group of symmetry must also be learnable, and thus lower bounds based on the cross-predictability or statistical dimension of the orbit can be used. Such lower bounds are no longer applying to any architecture but  exploit the symmetry of the architecture, however they still require knowledge of the target function in order to define the orbit.

In this paper, we would like to depart from the setting where we know the target function and thus can analyze the orbit
directly.
Instead, we would like to have a `proxy' that depends on the underlying  target function
and the initialized neural net
$\NN_{\Theta^0}$ at hand, where the set of weights at time zero $\Theta^0$ are drawn according to some distribution. In~\cite{AS20}, the following proposal is made (the precise statement will appear below): can we replace the correlation among a function class by the correlation between  a target function and an initialized net in order to have a necessary requirement for learning, i.e., if
\begin{align} \label{eq:INAL_small}
    \E_{\Theta^0}  \langle f,\NN_{\Theta^0} \rangle^2=n^{-\omega_n(1)},
\end{align}
or in other words, if at initialization the neural net correlates negligibly with the target function, is it still possible for GD to learn\footnote{Even with just an inverse polynomial accuracy, a.k.a., weak learning.} the function $f$  if the number of epochs of GD is polynomial? We next formalize the question further and provide an answer to it.

Note the difference between~\eqref{eq:CP_small} and~\eqref{eq:INAL_small}: in~\eqref{eq:CP_small} it is the class of functions that is too poorly correlated for {\it any}  SQ algorithm to efficiently learn; in~\eqref{eq:INAL_small} it is the specific network initialization that is too poorly correlated with the specific target in order for GD to efficiently learn.

While previous works and our proof relies on creating the orbit of a target function using the network symmetries and then arguing from the complexity of the orbit (using cross-predictability~\cite{AS20}), we believe that the INAL approach can be fruitful in additional contexts. In fact, the orbit approaches have two drawbacks: (1) they cannot give lower-bounds on functions like the full parity\footnote{we call full parity the function $f:\{\pm 1\}^n \to \{\pm \}$ s.t. $f(x) = \prod_{i=1}^n x_i$.} that have no complex orbit (in fact the orbit of the full parity is itself under permutation symmetries), (2) to estimate the complexity measure of the orbit class (e.g., the cross-predictability) from a sample set without full access to the target function, one needs labels of data points under the group action that defines the orbit (e.g., permutations), and these may not always be available from an arbitrary sample set. In contrast, (i) the INAL can still be small for the full parity function on certain symmetric neural networks, suggesting that in such cases the full parity is not learnable (we do not prove this here due to our specific proof technique but conjecture that this result still holds), (ii) the INAL can always be estimated from a random i.i.d.\ sample set, using basic Monte Carlo simulations (as used in our experiments, see Section~\ref{exper}).

While the notion of INAL makes sense for any input distribution, our theoretical
results are proved in a more limited setting of Boolean functions with uniform inputs. This follows the approach that has been taken in~\cite{abbe2020poly} and we made that choice for similar reasons. Furthermore, any computer-encoded function is eventually Boolean and major part of the PAC learning theory has indeed focused on Boolean functions (we refer to~\cite{shalev2014uml} for more on this subject). We nonetheless expect that the footprints of the proofs derived in this paper will apply to inputs that are iid Gaussians or spherical, using different basis than the Fourier-Walsh one.

Our general strategy in obtaining such a result is as follows: we first show that for the type of architecture considered, a low initial alignment (INAL) implies that the implicit target function is essentially high-degree in its Fourier basis; this part is specific to the architecture and the low INAL property. We next use the symmetry of the initialization to conclude that learning under such high-degree Fourier requirement implies learning a low CP class, and thus conclude by leveraging the results from \cite{abbe2020poly}. Finally, we do some experiments with the types of architecture used in our formal results, but also with  convolutional neural nets to test the robustness of the original conjecture. We observe that generally the INAL gives a decent proxy for the difficulty to learn (lower INAL gives lower learning accuracy). While this goes beyond the scope of our paper --- which is to obtain a first rigorous validation of the INAL conjecture for standard fully connected neural nets --- we believe that the numerical simulations give some motivations to pursue the study of the INAL in a more general setting.

\section{Definitions and Theoretical Contributions}
\label{sec:definitions}


For the purposes of our definition,
a neural network $\NN$ consists of a set of
neurons $V_{\NN}$, a random variable
$\Theta^0\in\mathbb{R}^k$ which corresponds to the initialization and
a collection of functions $\NN_{\Theta^0}^{(v)}:\mathbb{R}^n\to\mathbb{R}$
indexed with $v\in V_{\NN}$,
representing the outputs of neurons in the network.
The Initial Alignment (INAL) is defined as the average squared correlation between the target
function and any of the neurons at initialization:

\begin{definition}[Initial Alignment (INAL)]
Let $f:\mathbb{R}^n\to\mathbb{R}$ be a function and $P_{\cX}$ a distribution
on $\mathbb{R}^n$. Let $\NN$ be a neural network with neuron set $V_{\NN}$ and random initialization $\Theta^0$.
Then, the $\INAL$ is defined as
\begin{align}
    \INAL(f, \NN) := \max_{v \in V_{\NN} } \E_{\Theta^0 } \langle f,\NN_{\Theta^0}^{(v)} \rangle^2,
\end{align}
where we denoted by $\langle .\rangle$ the $L^2$-scalar product, namely $\langle f,g\rangle = \E_{x\sim P_\cX} [ f(x) g(x)] $.
\end{definition}
While the above definition makes sense for any neural network architecture, in this paper we focus on fully connected networks. Thus, in the following $\NN$ will denote a fully connected neural network.
Our main thesis is that in many settings a small INAL is bad news:
If at initialization there is no noticeable correlation between any of the
neurons and the target function, the GD-trained neural network
will not be able to recover such correlation during training in polynomial
time.

Of particular interest to us is the notion of INAL for a single
neuron with activation $\sigma$ and normalized Gaussian initialization.

\begin{definition}
Let $f:\mathbb{R}^n\to\mathbb{R}$,
$\sigma:\mathbb{R}\to\mathbb{R}$ and let
$P_\cX$ be a distribution on $\mathbb{R}^n$.
Then, we abuse the notation and write
\begin{align}
    \INAL(f,\sigma):=
    \E_{w^n, b^n}\left[\Big(\E_{x\sim P_{\cX}}
    f(x)\sigma((w^n)^T x+b^n)\Big)^2\right] \;,
\end{align}
where $w^n$ is a vector of iid $\mathcal{N}(0,1/n)$ Gaussians
and $b^n$ is another independent $\mathcal{N}(0,1/n)$ Gaussian. In the following, for readability, we will write $w=w^n$ and $b=b^n$, omitting the dependence on $n$.
\end{definition}

In the following, we say that a function $f:\mathbb{N}\to\mathbb{R}_{\ge 0}$ is
\emph{noticeable} if there exists $c\in\mathbb{N}$ such that $f(n)=\Omega(n^{-c})$.
On the other hand, we say that $f$ is \emph{negligible} if $\lim_{n\to\infty} n^cf(n)=0$ for
every $c\in\mathbb{N}$
(
which we also write $f(n) = n^{-\omega_n(1)}$).

\begin{definition}[Weak learning]
Let $(f_n)_{n\in\mathbb{N}}$ be a sequence of functions such that
$f_n:\mathbb{R}^n\to\mathbb{R}$ and $(P_{n})$ a sequence
of probability distributions on $\mathbb{R}^n$.
Let $(A_n)$ be a family of randomized algorithms such that
$A_n$ outputs a function $\NN_n:\mathbb{R}^n\to\mathbb{R}$.
Then, we say that $A_n$ \emph{weakly learns} $f_n$ if the function
\begin{align}
    g(n):= \big|\E_{x\sim\mathcal{P}_n, A_n} [f_n(x) \cdot \NN_n(x)]\big|
\end{align}
is noticeable.
\end{definition}
In this paper, we follow the example of~\cite{abbe2020poly} and focus
on \emph{Boolean} functions with inputs
and outputs in $\{\pm1\}$.
We
consider sequences of Boolean functions $f_n:\{\pm 1\}^n\to\{\pm 1\}$,
with the uniform input distribution
$\cU^n$, meaning that if $x\sim\cU^n$, then for all $i \in [n]$,
$x_i \overset{iid}{\sim} \Rad(1/2) $.
We focus on fully connected
neural networks with activation function $\sigma$, and trained by noisy GD --- this means GD where the gradient's magnitude per the precision noise is polynomially bounded, as commonly considered in statistical query algorithms \cite{kearns1998efficient,blum1994weakly} and GD learning \cite{abbe2020poly,quantifying,abbe2021power}; see \Cref{rem:as-explanation}
for a remainder of the definition.
We consider activation functions that satisfy the following conditions.


\begin{definition}[Expressive activation] \label{def:expressive_ab}
We say that a function $\sigma:\mathbb{R}\to\mathbb{R}$ is \emph{expressive} if it satisfies the following conditions:
\begin{itemize}
    \item[a)] $\sigma$ is measurable and \emph{polynomially bounded}
    i.e. there exists $C,c>0$ such that $|\sigma(x)| \leq Cx^c+C$
    for all $x\in\mathbb{R}$.
    \item[b)] Let the Gaussian smoothing of $\sigma$ be defined as $\Sigma(t) := \E_{Y \sim \cN(0,1)}[\sigma(Y+t)] $. For each $m \in \bN$ either $ \Sigma^{(m)}(0) \neq 0$ or $\Sigma^{(m+1)}(0) \neq 0$ (where $\Sigma^{(m)}$ denotes the $m$-th derivative of $ \Sigma$).
\end{itemize}
\end{definition}

\begin{remark}
\begin{itemize}
    \item[i)] Note that we have the identities
    $d_m=\frac{\Sigma^{(m)}(0)}{m!}$, and
    $\sigma=\sum_{m=0}^{\infty} d_mH_m$,
    where $H_m$ are the probabilist's Hermite polynomials.
    Therefore, an equivalent statement of the second condition
    in \Cref{def:expressive_ab} is that there are
    no two or more consecutive zeros in the Hermite expansion
    of $\sigma$.
    \item[ii)] Many functions
    are expressive, including
    ReLU and sign (see Appendix~\ref{app:common_act}
    for the proofs of those two cases).
    \item[iii)]
    On the other hand, it turns out that
    polynomials
    are \emph{not} expressive, as they do not satisfy point $b)$. This is necessary for our hardness results to hold, since for an activation function $P$ which is a
    polynomial of degree $k$ and $M$ a monomial
    of degree $k+1$ it can be checked that $\INAL(M, P)=0$, but constant-degree
    monomials are learnable by GD.
\end{itemize}
\end{remark}
Let us give one more definition before stating our main theorem.
\begin{definition}[N-Extension]
For a function
$f: \mathbb{R}^n \to \mathbb{R}$ and for $N > n$, we define its \emph{N-extension}
$\overline f: \mathbb{R}^N \to\mathbb{R}$ as
\begin{align}
    \overline f (x_1,x_2,...,x_n, x_{n+1},x_{n+2},..., x_{N}) = f(x_1,x_2,...,x_n).
\end{align}
\end{definition}

We can now state our main result which connects INAL and weak learning.

\begin{theorem}[Main theorem, informal]
\label{thm:main-informal}
Let $\sigma$ be an expressive activation function and $(f_n)$ a sequence of Boolean functions with uniform distribution on $\{\pm 1\}^n$.
If $\INAL(f_n, \sigma)$ is
negligible, then, for every $\epsilon>0$,
the $n^{1+\epsilon}$-extension of $f_n$ is \emph{not}
weakly learnable by $\poly(n)$-sized fully-connected neural networks
with iid initialization
and $\poly(n)$-number of steps of noisy gradient descent.
\end{theorem}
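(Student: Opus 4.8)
The plan is to carry out the three-step strategy sketched in the introduction. Step~1 (architecture-dependent): show that if $\INAL(f_n,\sigma)$ is negligible then $f_n$ is \emph{essentially high-degree}, i.e.\ for every fixed $D\in\bN$ the low-degree Fourier weight $W_{\le D}(f_n):=\sum_{|S|\le D}\widehat{f_n}(S)^2$ is negligible. Step~2: deduce that the cross-predictability $\CP(\orb(\overline{f_n}),\cU^N)$ of the orbit of the $n^{1+\epsilon}$-extension under coordinate permutations is negligible. Step~3: invoke the hardness result of \cite{abbe2020poly} — a target equipped with a distribution of negligible cross-predictability cannot be weakly learned by polynomial-size fully connected networks with i.i.d.\ initialization and polynomially many noisy gradient steps — together with the permutation-equivariance that an i.i.d.\ initialization confers on such a learner. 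Note that $\INAL(f_n,\sigma)$ is a property of $f_n$ and $\sigma$ alone; the i.i.d.\ initialization of the trained network enters only in Step~3.

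For Step~1, set $c_S(w,b):=\E_{x\sim\cU^n}[\chi_S(x)\,\sigma(w^\top x+b)]$, so that $\E_x[f_n(x)\sigma(w^\top x+b)]=\sum_S\widehat{f_n}(S)\,c_S(w,b)$ and $\INAL(f_n,\sigma)=\sum_{S,T}\widehat{f_n}(S)\widehat{f_n}(T)\,\E_{w,b}[c_Sc_T]$. Because the coordinates of $w$ are i.i.d.\ and symmetric, flipping the sign of $w_i$ for any single $i\in S\triangle T$ maps $c_Sc_T$ to $-c_Sc_T$ while preserving the law of $(w,b)$; hence all off-diagonal terms vanish and $\INAL(f_n,\sigma)=\sum_{d\ge0}W_d(f_n)\,a_d(n)$, where $a_d(n)$ is (by permutation symmetry of the law of $w$) the common value of $\E_{w,b}[c_S(w,b)^2]$ over sets $S$ with $|S|=d$. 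By Mehler's formula, $a_d(n)=\E_{z\sim\cU^n}\!\big[\chi_S(z)\sum_{m\ge0}(d_m^{(n)})^2\,r(z)^m\big]$, where $z=xx'$, $r(z)=\tfrac{1+\sum_i z_i}{n+1}$ is the normalized correlation of $w^\top x+b$ and $w^\top x'+b$, and $d_m^{(n)}\to d_m$ are the Hermite coefficients of $t\mapsto\sigma(\sqrt{(n+1)/n}\,t)$. Every term is nonnegative, since $\E_z[\chi_S(z)(\sum_i z_i)^k]$ counts index $k$-tuples whose parity profile equals $S$; the smallest-order contributions are the $m=d$ term (order $n^{-d}$, coefficient $\propto d_d^2$) and the $m=d+1$ term (order $n^{-(d+1)}$, coefficient $\propto d_{d+1}^2$ — available only because of the bias $b$, and crucial for odd activations, for which the $m=d$ term vanishes at even $d$). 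The expressivity hypothesis forbids two consecutive Hermite zeros, so at least one of $d_d,d_{d+1}$ is nonzero and $a_d(n)\ge c_d\,n^{-(d+1)}$ for a constant $c_d>0$ and all large $n$. Therefore $W_d(f_n)\le \INAL(f_n,\sigma)/a_d(n)=O\!\big(n^{d+1}\,\INAL(f_n,\sigma)\big)$ is negligible for every fixed $d$ (the case $d=0$, with $a_0(n)=\Theta(1)$ or $\Theta(n^{-1})$, shows $\widehat{f_n}(\emptyset)^2$ is negligible), hence so is $W_{\le D}(f_n)$ for every fixed $D$. I expect this Hermite/combinatorial estimate, and nailing down why expressivity and the bias term are exactly what is needed, to be the main technical obstacle.

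For Step~2, write $\CP(\orb(\overline{f_n}),\cU^N)=\E_\pi\big[\langle\overline{f_n},\pi\cdot\overline{f_n}\rangle^2\big]$ for a uniform $\pi\in S_N$ acting on functions by permuting coordinates, and split $f_n=f_n^{\le D}+f_n^{>D}$. Cauchy--Schwarz bounds the contribution of $\langle\overline{f_n^{\le D}},\pi\cdot\overline{f_n^{\le D}}\rangle$ by $W_{\le D}(f_n)^2$ and that of the two cross terms by $W_{\le D}(f_n)$; for the high-degree part, $\langle\overline{f_n^{>D}},\pi\cdot\overline{f_n^{>D}}\rangle^2\le\sum_{S\subseteq[n],\,|S|>D}\widehat{f_n}(\pi(S))^2\,\mathbf 1[\pi(S)\subseteq[n]]$, and since a random $\pi$ maps a fixed set of size $>D$ into $[n]$ with probability at most $(n/N)^{D+1}=n^{-\epsilon(D+1)}$, its expectation is at most $n^{-\epsilon(D+1)}$. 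As $W_{\le D}(f_n)$ is negligible for each fixed $D$, a standard diagonalization yields $D(n)\to\infty$ with $W_{\le D(n)}(f_n)$ still negligible; then $n^{-\epsilon D(n)}$ is negligible as well, and summing the three pieces gives $\CP(\orb(\overline{f_n}),\cU^N)=n^{-\omega_n(1)}$.

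For Step~3, suppose for contradiction that a family of polynomial-size fully connected networks with i.i.d.\ initialization, trained by polynomially many noisy gradient steps, weakly learns $\overline{f_n}$. Permuting the input coordinates of a fully connected network amounts to permuting the columns of its first weight matrix, which leaves an i.i.d.\ initialization invariant in distribution and commutes with noisy gradient descent; hence the same algorithm weakly learns $\pi\cdot\overline{f_n}$ with the same advantage for every $\pi\in S_N$, and so it weakly learns a uniformly random element of $\orb(\overline{f_n})$. This contradicts the cross-predictability lower bound of \cite{abbe2020poly} in view of Step~2. Besides the estimate of Step~1, the only non-routine ingredients are this (by now standard) equivariance reduction and the diagonalization in Step~2.
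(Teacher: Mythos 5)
Your proposal is correct, and its overall architecture is the same as the paper's: negligible $\INAL$ implies $f_n$ is high-degree, the $n^{1+\epsilon}$-extension then forces negligible cross-predictability of the orbit, and the hardness theorem of~\cite{abbe2020poly} plus permutation-equivariance of an iid-initialized fully connected net finishes the argument. Your Steps~2 and~3 match \Cref{prop:CP_low} and the proof of \Cref{cor:learning} almost verbatim (the paper avoids your diagonalization by simply picking $k$ as a function of the target exponent, which is cosmetic), and your sign-flip argument killing the off-diagonal Fourier terms is exactly the proof of \Cref{prop:general_functions}. Where you genuinely diverge is the key estimate $\INAL(M_d,\sigma)=\Omega(n^{-(d+1)})$, i.e.\ \Cref{prop:monomials}, which the paper calls its main technical contribution. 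The paper conditions on the signs of $(w,b)$, applies Jensen to reduce to a conditional first moment, Taylor-expands the Gaussian smoothing $\Sigma_n$ at $0$ up to order $P\in\{d,d+1\}$, computes the moments $\E[M_T(x)G^{\nu}]$ exactly, and must then separately show that the Lagrange remainder is $O(n^{-P/2-1/2})$, strictly below the main term. You instead compute the second moment $\E_{w,b}[\langle M_S,\sigma\rangle^2]$ exactly via Mehler's formula as $\sum_m (d_m^{(n)})^2\,\E_z[M_S(z)\rho(z)^m]$ with $\rho(z)=(1+\sum_i z_i)/(n+1)$, observe that every summand is nonnegative, and lower-bound by retaining only the $m=d$ and $m=d+1$ terms. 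This buys a structurally cleaner argument: nonnegativity lets you discard the entire tail for free rather than bounding a remainder, and it makes immediately visible both the role of the bias (which is what makes the $m=d+1$ term nonvanishing, exactly as needed for odd activations) and of the no-two-consecutive-Hermite-zeros hypothesis. What your route still owes, and you correctly flag it, is the continuity estimate $d_m^{(n)}\to d_m$ — the analogue of part~iii) of \Cref{def:sigma-properties} — together with the (routine) Fubini justification for exchanging $\E_z$ with the Mehler series, which follows from $\sum_m (d_m^{(n)})^2<\infty$ and $|\rho|\le 1$ for polynomially bounded $\sigma$.
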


\begin{remark}
\Cref{thm:main-informal} says that Boolean functions
that have negligible correlation for \emph{some} expressive
activation and Gaussian iid initialization, cannot be learned by neural networks utilizing
\emph{any} activation on a fully-connected architecture and
any iid initialization.
\end{remark}



\begin{remark}
Consider a sequence of neural networks $(\NN_n)$ utilizing
an expressive activation $\sigma$.
We believe that the notion of $\INAL(f_n, \NN_n)$
is relevant to characterizing if
a family of Boolean functions $(f_n)$ is weakly learnable by noisy GD
on those neural networks.
On the one hand, if $\INAL(f_n, \NN_n)$ is noticeable, then
at initialization there exists a neuron from which a weak
correlation with $f_n$ can be extracted. Therefore, in a sense
weak learning is achieved at initialization.

On the other hand, assume additionally that the architecture
is such that there exists a neuron computing
$\sigma(w^Tx+b)$, where $x$ is the input and $(w,b)$ are initialized
as iid $\mathcal{N}(0,1/n)$ Gaussians. (In other words, there exists
a fully-connected neuron in the first hidden layer.)
Then, by definition of INAL, if $\INAL(f_n, \NN_n)$ is negligible,
then also $\INAL(f_n, \sigma)$ is negligible.
Accordingly, by \Cref{thm:main-informal},
an extension of $(f_n)$ is not weakly learnable.

While we do not have a proof, we suspect that a similar
property might hold also for some other architectures
and initializations.
\end{remark}


Note that we obtain hardness only for an extension of $f_n$, rather than
for the original function. Interestingly, in some settings GD can learn the function, while the $2n$-extension of the same
function is hard to learn\footnote{
For example, for the Boolean parity function $M_n(x)=\prod_{i=1}^n x_i$ with both the input distribution and the weight initialization iid uniform in $\{\pm 1\}$ and cosine activation~\cite{EnricDraft}.}.
However, we are not sure if such examples can be constructed
for the continuous Gaussian initialization that we consider.




\section{Formal Results}\label{sec:results}
In this section, we write precise statements of our theorems.
For this, we need a couple of more definitions.

\begin{definition}[Cross-Predictability]
Let $P_\cF$ be a distribution over functions
from $\mathbb{R}^n$ to $\mathbb{R}$ and
$P_{\cX}$ a distribution over $\mathbb{R}^n$. Then,
\begin{align}
\CP(P_\cF,P_\cX) = \E_{F,F'\sim P_\cF} [\E_{X\sim P_\cX} [ F(X) F'(X) ]^2 ]\;.
\end{align}
\end{definition}

\begin{definition}[Orbit]
For
$f:\mathbb{R}^n\to\mathbb{R}$ and a permutation
$\pi\in S_n$, we let
$(f\circ \pi)(x)=f(x_{\pi(1)},\ldots,x_{\pi(n)})$.
Then, we define the \emph{orbit} of $f$ as
\begin{align}
\orb(f) : = \{ f\circ\pi : \pi \in S_n\}\;.
\end{align}
\end{definition}




Let us now give the full statement of our main theorem.
\begin{theorem} \label{thm:mainthm}
Let $(f_n)$ be a sequence of Boolean functions
with $f_n:\{\pm 1\}^n\to\{\pm 1\}$ and $x\sim\cU^n$ and let $\sigma$ be an expressive activation.

If $\INAL(f_n, \sigma)$ is negligible, then, for every
$\epsilon>0$, the cross predictability
$\CPMacro$ is negligible,
where $N=n^{1+\epsilon}$ and
$\orb(\overline{f_n})$ denotes (uniform distribution on)
the orbit of the $N$-extension of $f_n$.

More precisely,
if $\INAL(f_n,\sigma)=O(n^{-c})$, then
$\CPMacro=O(n^{-\frac{\epsilon}{1+\epsilon}(c-1)})$.
\end{theorem}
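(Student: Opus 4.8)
The plan is to execute the two-step strategy sketched in the introduction: first show that a negligible $\INAL(f_n,\sigma)$ forces $f_n$ to have (polynomially) small Fourier mass on every bounded degree, and then show that such a high-degree function has, for $N=n^{1+\epsilon}$, a negligible orbit cross-predictability, following the estimates of~\cite{abbe2020poly}.

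\textbf{Step 1 (INAL splits along Fourier levels).} Expand $f_n=\sum_{S\subseteq[n]}\widehat{f_n}(S)\chi_S$ in the Fourier--Walsh basis, $\chi_S(x)=\prod_{i\in S}x_i$, and set $a_S(w,b):=\E_{x\sim\cU^n}[\chi_S(x)\,\sigma(w^Tx+b)]$, so that $\INAL(f_n,\sigma)=\E_{w,b}\big[\big(\sum_S\widehat{f_n}(S)a_S(w,b)\big)^2\big]$. First I would observe that the cross terms vanish: the law of $(w,b)$ is invariant under flipping the sign of a single coordinate $w_i$, and doing so (combined with the measure-preserving relabelling $x_i\mapsto-x_i$) multiplies $a_S$ by $-1$ exactly when $i\in S$; hence $\E_{w,b}[a_Sa_{S'}]=0$ for $S\neq S'$, and
\begin{align}
\INAL(f_n,\sigma)=\sum_{k\ge 0}\beta_k\,W_k(f_n),\qquad \beta_k:=\E_{w,b}\big[a_S(w,b)^2\big]\ \text{for }|S|=k,
\end{align}
where $W_k(f_n)=\sum_{|S|=k}\widehat{f_n}(S)^2$ is the level-$k$ Fourier weight ($\sum_k W_k(f_n)=1$ since $f_n$ is $\pm1$-valued), and $\beta_k$ is well defined because the Gaussian initialization is permutation-invariant.

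\textbf{Step 2 (the crux: $\beta_k$ is polynomially large).} I would prove that for each fixed $k$, $\beta_k=\Omega(n^{-(k+1)})$; more precisely $\beta_k=\Theta(n^{-k})$ when $\Sigma^{(k)}(0)\neq0$ and $\beta_k=\Theta(n^{-(k+1)})$ otherwise. Conditioning on the coordinates outside $S$ and using $\E_{x_i}[x_i g(x_i)]=\tfrac12(g(1)-g(-1))$ gives $a_S(w,b)=\E_{x_{S^c}}[(\Delta_{w_S}\sigma)(w_{S^c}^Tx_{S^c}+b)]$, where $\Delta_{w_S}:=\prod_{i\in S}\Delta_{w_i}$ and $(\Delta_a g)(t)=\tfrac12(g(t+a)-g(t-a))$; equivalently, expanding $\sigma=\sum_m d_m H_m$, the Hermite level $m=k$ contributes exactly $\Sigma^{(k)}(0)\prod_{i\in S}w_i$ to $a_S$, the level $m=k+1$ contributes exactly $\Sigma^{(k+1)}(0)\,b\prod_{i\in S}w_i$, and every remaining contribution carries an extra factor of degree $\ge2$ in $(b,w_{S^c})$ and is therefore of strictly smaller order in $1/n$. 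If $\Sigma^{(k)}(0)\neq0$, I would absorb $b$ together with the fluctuations of $w_{S^c}^Tx_{S^c}$ into a Gaussian smoothing $\Sigma_v$ of $\sigma$ with variance $v=(n-k+1)/n\to1$ and apply Jensen: $\beta_k\ge\E_{w_S}[((\Delta_{w_S}\Sigma_v)(0))^2]=\Sigma^{(k)}(0)^2n^{-k}(1+o(1))$. If $\Sigma^{(k)}(0)=0$, then $\Sigma^{(k+1)}(0)\neq0$ by the ``no two consecutive zeros'' clause of expressiveness, and isolating the term $\Sigma^{(k+1)}(0)\,b\prod_{i\in S}w_i$, which is orthogonal to the remainder up to lower-order cross terms, gives $\beta_k=\Sigma^{(k+1)}(0)^2\,\E[b^2]\prod_{i\in S}\E[w_i^2]\,(1+o(1))=\Sigma^{(k+1)}(0)^2n^{-(k+1)}(1+o(1))$. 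I expect this step to be the main obstacle. The difficulty is twofold: one must control the remaining Hermite levels $>k+1$ (each carrying extra powers of $\|w_{S^c}\|^2$ and $b^2$) uniformly enough to see that they do not cancel the main term, and one cannot differentiate $\sigma$ directly (e.g.\ $\sigma=\sign$), which is why the Gaussian-smoothed $\Sigma$ must be introduced first. Expressiveness is exactly the hypothesis that rules out $\beta_k$ being super-polynomially small: indeed $\beta_k=0$ when $\sigma$ is a polynomial of degree $<k$, as in the remark after~\Cref{def:expressive_ab}.

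\textbf{Steps 3--4 (high degree $\Rightarrow$ small orbit cross-predictability).} Combining Steps 1--2, if $\INAL(f_n,\sigma)=O(n^{-c})$ then $W_k(f_n)\le\INAL(f_n,\sigma)/\beta_k=O(n^{\,k-(c-1)})$ for every fixed $k$, so all Fourier mass of $f_n$ up to any degree $<c-1$ is polynomially small. For the cross-predictability, write $\tau=\pi^{-1}\pi'$, which is uniform on $S_N$ when $\pi,\pi'$ are, so that
\begin{align}
\CPMacro=\E_{\tau}\big[\langle\overline{f_n},\overline{f_n}\circ\tau\rangle^2\big],\qquad \langle\overline{f_n},\overline{f_n}\circ\tau\rangle=\sum_{S\subseteq[n]:\,\tau(S)\subseteq[n]}\widehat{f_n}(S)\,\widehat{f_n}(\tau(S)),
\end{align}
and split the inner sum at a threshold $d$ into a low part $L_\tau$ ($|S|\le d$) and a high part $H_\tau$, so $\CPMacro\le2\E_\tau[L_\tau^2]+2\E_\tau[H_\tau^2]$. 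Then $|L_\tau|\le\sum_{k\le d}W_k(f_n)=O(n^{\,d-(c-1)})$, whereas Cauchy--Schwarz together with $\sum_S\widehat{f_n}(S)^2\le1$ and the elementary estimate $\binom{n}{k}/\binom{N}{k}\le(n/N)^k=n^{-\epsilon k}$ gives $\E_\tau[H_\tau^2]\le\sum_{k>d}W_k(f_n)\,n^{-\epsilon k}\le n^{-\epsilon d}$. Choosing $d$ of order $\tfrac{c-1}{1+\epsilon}$ makes the low part $O(n^{-\frac{2\epsilon}{1+\epsilon}(c-1)})$ negligible compared with the high part $O(n^{-\frac{\epsilon}{1+\epsilon}(c-1)})$, which yields the claimed bound and is negligible precisely when $c\to\infty$ (i.e.\ when $\INAL(f_n,\sigma)$ is negligible). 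This is where the mild polynomial blow-up to $N=n^{1+\epsilon}$ variables is used, and the factor $\tfrac1{1+\epsilon}$ is the price of that extension. The estimates in Steps 3--4 are routine binomial bookkeeping in the style of~\cite{abbe2020poly}; the genuine work is the lower bound on $\beta_k$ in Step 2.
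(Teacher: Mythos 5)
Your proposal is correct and follows essentially the same route as the paper: the Fourier decomposition of the INAL with vanishing cross terms via the sign-flip symmetry is the paper's Proposition~\ref{prop:general_functions}, the lower bound $\beta_k=\Omega(n^{-(k+1)})$ via Gaussian smoothing and the first nonvanishing derivative $\Sigma^{(P)}(0)$ with $P\in\{k,k+1\}$ is Proposition~\ref{prop:monomials}, and the degree-split plus $\binom{n}{k}/\binom{N}{k}\le n^{-\epsilon k}$ estimate is Proposition~\ref{prop:CP_low}, with the same final exponent $\frac{\epsilon}{1+\epsilon}(c-1)$. The only differences are cosmetic (finite differences and Hermite levels versus the paper's Taylor expansion with Lagrange remainder, and conditioning on $w_S$ versus on the signs of $(w,b)$ before applying Jensen).
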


Applying~\cite{abbe2020poly}[Theorem 3] to \cref{thm:mainthm}
implies the following corollary. We refer to \cref{app:fully-connected}
for additional clarifications on the notion of a fully connected
neural net.

\begin{corollary}\label{cor:learning}
Let $f_n$ and $\sigma$ be as in \cref{thm:mainthm} with negligible $\INAL(f_n,\sigma)$ and let $\epsilon>0$ with
$N=n^{1+\epsilon}$ and $\overline{f_n}$ denote the
$N$-extension of $f_n$.

Let $\NN=(\NN_n)$ be any sequence of
fully connected neural nets
of polynomial size.
Then, for any iid initializaton, and
any polynomial bounds on the learning rate,
learning time $T=(T_n)$, noise level and overflow range, the noisy GD algorithm
after $T$ steps of training outputs a neural net
$\NN^{(T)}$ such that the correlation
\begin{align}
    g(n):=\big|\E_{\NN^{(T)}}\langle\NN^{(T)},\overline{f_n}\rangle
    \big|
\end{align}
is negligible.

More precisely,
if $\INAL(f_n,\sigma)=O(n^{-c})$, then for a noisy GD run for $T$
steps on a fully connected neural network with $E$ edges, with learning rate
$\gamma$, overflow range $A$ and noise level $\tau$
it holds that
\begin{align}
    g(n) = O\left(\frac{\gamma T\sqrt{E}A}{\tau}\cdot n^{-\frac{\epsilon}{4(1+\epsilon)}(c-1)}\right)\;.
\end{align}
\end{corollary}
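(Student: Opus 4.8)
\emph{Overview and Step 1 (reduce to cross-predictability).} Since \cref{thm:mainthm} may be assumed, the corollary is just the composition of its cross-predictability bound with the gradient-descent lower bound of \cite{abbe2020poly}[Theorem~3]: the plan is to recall what the latter buys us, explain why it is the \emph{orbit} that matters, and do the arithmetic. By \cref{thm:mainthm}, negligibility of $\INAL(f_n,\sigma)$ forces $\CPMacro$ to be negligible, with the quantitative form $\INAL(f_n,\sigma)=O(n^{-c})\Rightarrow\CPMacro=O(n^{-\frac{\epsilon}{1+\epsilon}(c-1)})$. It therefore suffices to show that a target whose orbit under the input-coordinate group $S_N$ has small cross-predictability cannot be weakly learned by noisy GD on a fully connected net with iid initialization, with an explicit dependence on the learning rate $\gamma$, horizon $T$, number of edges $E$, overflow range $A$ and noise level $\tau$ — which is precisely \cite{abbe2020poly}[Theorem~3].

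\emph{Step 2 (the GD lower bound, and why the orbit is the right object).} An iid initialization is exchangeable under permuting the $N$ input coordinates (permuting inputs and relabelling the first-layer weights accordingly leaves the function unchanged and the weight law invariant), so running noisy GD on the distributionally permutation-invariant fully connected net with target $\overline{f_n}$ produces an output whose expected correlation with $\overline{f_n}$ equals that same quantity for the target $\overline{f_n}\circ\pi$, for every $\pi\in S_N$. Hence a noisy-GD learner of $\overline{f_n}$ is, for free, a learner with the same correlation of a uniformly random element of $\orb(\overline{f_n})$, and for such a random target the usual SQ/gradient argument applies: at each step the population gradient has squared correlation with a random orbit element controlled by $\CPMacro$, the precision noise $\tau$ and overflow range $A$ fix the per-query signal-to-noise ratio, and $\gamma T\sqrt{E}$ bounds the accumulated gradient mass, yielding $\big|\E_{\NN^{(T)}}\langle\NN^{(T)},\overline{f_n}\rangle\big|=O\!\big(\tfrac{\gamma T\sqrt{E}A}{\tau}\,\CPMacro^{1/4}\big)$. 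I would quote this from \cite{abbe2020poly}[Theorem~3] rather than re-derive it.

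\emph{Step 3 (assemble, and the one delicate point).} Combining the two bounds gives $g(n)=O\!\big(\tfrac{\gamma T\sqrt{E}A}{\tau}\,(n^{-\frac{\epsilon}{1+\epsilon}(c-1)})^{1/4}\big)=O\!\big(\tfrac{\gamma T\sqrt{E}A}{\tau}\,n^{-\frac{\epsilon}{4(1+\epsilon)}(c-1)}\big)$, the quantitative claim; and since negligibility of $\INAL(f_n,\sigma)$ means the estimate of \cref{thm:mainthm} holds for \emph{every} constant $c$, the exponent $\tfrac{\epsilon}{4(1+\epsilon)}(c-1)$ can be taken arbitrarily large, beating the $\poly(n)$ prefactor, so $g(n)$ is negligible. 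I expect no genuinely new estimate is needed; the only real work — and thus the ``obstacle'' — is aligning the present setup (fully connected in the sense of \cref{app:fully-connected}, iid initialization, noisy GD with polynomially bounded learning rate, horizon, noise and overflow, Boolean inputs under $\cU^N$) with the precise hypotheses under which \cite{abbe2020poly}[Theorem~3] is stated, and checking that the cross-predictability it involves is exactly that of the orbit of the $N$-extension — which is why \cref{thm:mainthm} bakes in the passage to $N=n^{1+\epsilon}$ coordinates, that being what makes $\orb(\overline{f_n})$ a large enough class for the bound to be meaningful.
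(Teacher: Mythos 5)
Your proposal is correct and follows essentially the same route as the paper: invoke Theorem~3 of \cite{abbe2020poly} for the class $\orb(\overline{f_n})$, use the permutation-invariance of the iid initialization of a fully connected net to transfer the bound from a uniformly random orbit element to the single target $\overline{f_n}$, and then plug in the cross-predictability estimate from \cref{thm:mainthm}. The only detail you gloss over is that the cited theorem bounds the agreement probability from one side only, so the paper also applies it to the orbit of $-\overline{f_n}$ to obtain the two-sided bound on $\big|\E\langle\NN^{(T)},\overline{f_n}\rangle\big|$; this is a routine fix and not a genuine gap.
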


\begin{remark}\label{rem:as-explanation}
In the result above, the neural net can have any
feed-forward architecture with layers of fully-connected
neurons and any activation such that the gradients
are almost surely well-defined.
The initialization can be iid
from any distribution (which can depend on $n$). We remark that the result of Corollary~\ref{cor:learning} can be strengthen to apply to any initialization such that the distribution of the weights in the first layer is invariant under permutations of input neurons. We refer to~\cref{app:fully-connected} for more details.

The algorithm considered is noisy gradient descent\footnote{In fact, it can be SGD with batch size $m$ for
large enough $m$.}
using any differentiable loss function, meaning that at every step
an iid $\mathcal{N}(0,\tau^2)$ noise vector is added
to all components of the gradient, where $\tau$ is called
the \emph{noise level}.
Furthermore,
every component of the gradient during the execution
of the algorithm whose evaluation
exceeds the \emph{overflow range} $A$ in absolute value
is clipped to $A$ or $-A$, respectively. This covers in particular the bounded `precision model' of \cite{abbe2021power}. 

For the purposes of function $g(n)$, it is assumed that
the neural network outputs a guess in $\{\pm 1\}$
using any form of thresholding (eg., the sign function)
on the value of the output neuron.
See~\cite{abbe2020poly}[Section~2.3.1].
\end{remark}

\section{Proof of Main Theorem}
In this section we sketch the proof of \Cref{thm:mainthm}. We first
state basic definitions from Boolean function analysis,
then we give a short outline of the proof, and then
we state main propositions used in the proof. Finally, we show how the propositions are combined to prove Theorem~\ref{thm:mainthm} and Corollary~\ref{cor:learning}.
Further proofs and details are in the appendices.

We introduce some notions of Boolean analysis, mainly taken from Chapters 1,2 of~\cite{o'donnell_2014}.
For every $f:\{\pm 1\}^n\to\mathbb{R}$ we denote its Fourier expansion as
\begin{align}
    f(x) = \sum_{S \subseteq [n]} \hat f(S)  M_S(x),
\end{align}
where $M_S(x) = \prod_{i \in S} x_i$ are the standard Fourier basis elements and $\hat f(S) $ are the Fourier coefficients of $f$, defined as $\hat f(S) = \langle f,M_S\rangle$. We denote by
\begin{align}
    W^{k} (f) & = \sum_{S: |S| = k} \hat f(S)^2\\
    W^{\leq k} (f) & = \sum_{S: |S| \leq k} \hat f(S)^2\;,
\end{align}
the total weight of the Fourier coefficients of $f$ at degree $k$ (respectively up to degree $k$).
\begin{definition}[High-Degree]
We say that a family of functions $f_n:\{ \pm 1\}^n \to \mathbb{R}$  is ``high-degree'' if for any fixed $k$, $W^{\leq k} (f_n) $ is negligible.
\end{definition}

\paragraph{Proof Outline of \Cref{thm:mainthm}.}
\begin{enumerate}
    \item We initially restrict our attention to the basis Fourier elements, i.e. the monomials $M_S(x) := \prod_{i \in S} x_i$ for $S \in [n]$. We consider the single-neuron
    alignments $\INAL(M_S,\sigma)$ for expressive activations.
    We prove that these INALs are noticeable for constant degree monomials (Proposition~\ref{prop:monomials}).
    \item For a general $f:\{\pm1 \}^n \to \bR$ we show that the initial alignment between $f$ and a single-neuron architecture can be computed from its Fourier expansion (Proposition~\ref{prop:general_functions}). As a consequence,
    for any expressive $\sigma$,
    if $ \INAL(f,\sigma)$ is negligible, then $f$ is high-degree (Corollary~\ref{cor:INAL_small_high_degree}).
    \item
    We construct the extension of $f$ and take its orbit $\orb(\overline{f})$.
    Since the extension has a sparse structure of its Fourier coefficients,
    that guarantees
    that the cross-predictability of $\orb(\overline{f})$ is negligible
    (Proposition~\ref{prop:CP_low}).
    \item In order to prove \Cref{cor:learning}, we invoke the lower bound of~\cite{abbe2020poly}
    (Theorem~\ref{thm:AS20}) applied to the class $\orb(\overline{f})$ .
\end{enumerate}

A crucial property of the expressive activations is that they correlate
with constant-degree monomials. To emphasize this, we introduce another
definition.
\begin{definition}
An activation $\sigma$ is \emph{correlating} if for every $k$,
the sequence $\INAL(M_k,\sigma)$ is noticeable, where we think of
$M_k(x)=\prod_{i=1}^k x_i$ as a sequence of Boolean functions for every input dimension $n\ge k$.

Furthermore, if there exists $c$ such that for every $k$
it holds $\INAL(M_k,\sigma)=\Omega(n^{-(k+c)})$,
then we say that $\sigma$ is \emph{$c$-strongly correlating}.
\end{definition}

\begin{proposition}
\label{prop:monomials}
If $\sigma$ is expressive (according to Definition~\ref{def:expressive_ab}),
then it is 1-strongly correlating.
\end{proposition}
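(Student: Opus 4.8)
The plan is to compute $\INAL(M_k,\sigma)$ explicitly in terms of the Gaussian-smoothing coefficients $d_m = \Sigma^{(m)}(0)/m!$ and then show the resulting expression is $\Omega(n^{-(k+1)})$ whenever $\sigma$ is expressive. First I would fix the input dimension $n \ge k$, draw $w \sim \mathcal{N}(0, I_n/n)$ and $b \sim \mathcal{N}(0,1/n)$, and evaluate the inner product $\E_{x \sim \cU^n}[M_k(x)\,\sigma(w^Tx + b)]$. The key trick is to pass from the Boolean average over $x$ to a Gaussian computation: conditionally on $w,b$, writing $w^Tx + b = \sum_{i=1}^n w_ix_i + b$, I would compare this to a genuine Gaussian. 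Concretely, the cleaner route is to use the well-known fact (a form of the "invariance" / Hermite expansion argument, cf.\ the identity $\sigma = \sum_m d_m H_m$ and $\Sigma(t) = \E_{Y\sim\mathcal N(0,1)}[\sigma(Y+t)]$ already recorded in the Remark) that averaging $\sigma$ against a sum of independent mean-zero unit-variance-scaled variables behaves, to leading order in the relevant small parameter, like averaging its Gaussian smoothing $\Sigma$. I expect that the dominant contribution will come from expanding $\Sigma$ in a Taylor series around $0$ and picking out the degree-$k$ term, because the Fourier/Hermite character of $M_k$ selects exactly the coefficient $d_k$ (up to combinatorial factors and the $1/n$ scaling of each coordinate).

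More precisely, I would argue as follows. Since the $x_i$ are $\pm 1$, we have $\prod_i \sigma(\cdot)$-type simplifications: using the Fourier expansion of $x \mapsto \sigma(w^Tx+b)$ in the $x$-variables and extracting the coefficient of $M_k = x_1\cdots x_k$, one gets that $\E_x[M_k(x)\sigma(w^Tx+b)]$ equals a certain multilinear expression in $w_1,\dots,w_k$ (the other coordinates average out or contribute through the "smoothed" activation). Taking $\E_{w,b}$ of the square then produces, after expanding, a sum of terms; the leading term as $n \to \infty$ is of order $(d_k)^2 \cdot (\text{const}) \cdot n^{-k}$ coming from the "diagonal" contribution $\E[w_1^2\cdots w_k^2] = n^{-k}$, provided $d_k \ne 0$. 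If $d_k = 0$, the degree-$k$ term vanishes and I would need to look one order further: the degree-$(k{+}1)$ coefficient $d_{k+1}$ (or a combination involving it) contributes at order $n^{-(k+1)}$. Here is exactly where expressiveness enters: condition (b) of Definition \ref{def:expressive_ab} guarantees that $d_k$ and $d_{k+1}$ are not both zero, so one of these two contributions survives, giving $\INAL(M_k,\sigma) = \Omega(n^{-(k+1)})$, which is the $1$-strongly correlating bound (and in particular noticeable). One must also check that the lower-order-in-$n$ surviving term is not accidentally cancelled by higher-order terms in the expansion; this follows because for fixed $k$ the expansion in $1/n$ is a convergent/asymptotic series dominated by its first nonzero term, using the polynomial-growth bound (a) of Definition \ref{def:expressive_ab} to control tails (the $L^2$-type moments of $\sigma$ against Gaussians are finite).

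The main obstacle, I expect, is making the "pass to Gaussian smoothing" step rigorous with explicit control of the $1/n$ expansion: the variable $w^Tx+b$ is a sum of $n+1$ independent terms each of variance $1/n$, so it is close to $\mathcal N(0, 1 + O(1/n))$, but I need not just a CLT-type statement — I need to track the coefficient of the $M_k$-Fourier mode down to order $n^{-k}$ (and possibly $n^{-(k{+}1)}$), which requires either a careful Taylor/Edgeworth-style expansion of $\E_x[\sigma(w^Tx+b)M_k(x)]$ in the small parameters $w_i$, or a direct Hermite-expansion computation where one expands $\sigma = \sum_m d_m H_m$ and uses orthogonality. The Hermite route is likely cleanest: substitute the expansion, use that $\E_x[H_m(w^Tx+b)M_k(x)]$ is computable because $w^Tx+b$ is a fixed affine form in the $\pm1$ variables, and identify which $m$ first yields a nonzero, appropriately-scaled contribution when the outer $\E_{w,b}$ is applied. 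The bookkeeping of combinatorial constants (Stirling numbers / multinomials relating monomials in the $x_i$ to $H_m$) is routine but needs care to confirm the leading constant is strictly positive; I would relegate that to an appendix. Assuming that bookkeeping goes through, the conclusion $\INAL(M_k,\sigma) = \Omega(n^{-(k+1)})$ — uniform in the sense that the same constant $c=1$ works for all $k$ — follows, establishing that expressive activations are $1$-strongly correlating.
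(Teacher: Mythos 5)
Your plan has the right skeleton and it locates the role of expressiveness correctly: the first nonzero smoothing coefficient among $d_k=\Sigma^{(k)}(0)/k!$ and $d_{k+1}$ produces a contribution of order $n^{-k}$ or $n^{-(k+1)}$, which is exactly how the paper's argument runs (its Lemma~\ref{prop:smooth-is-correlating} gives $\INAL(M_k,\sigma)=\Omega(n^{-P})$ for the first $P\ge k$ with $\Sigma^{(P)}(0)\neq 0$). Structurally you differ in one respect: you expand the square $\E_{w,b}\langle M_k,\sigma\rangle^2$ directly, whereas the paper first conditions on $\sign(w),\sign(b)$ and applies Jensen, then shows the \emph{conditional first moment} is $\Omega(n^{-P/2})$ in absolute value uniformly over the signs; the payoff of the conditioning is that every surviving term in the expansion of $\E[M_T(x)G^\nu]$ carries the same sign $(-1)^{C'(\tau,\sign(b))}$, so the "no accidental cancellation" issue you flag disappears. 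Your second-moment route is viable in principle, but it forces you to also control the off-diagonal cross terms $a_\nu a_{\nu'}\E_{w,b}\bigl[\E_x[M_kG^\nu]\,\E_x[M_kG^{\nu'}]\bigr]$, which you do not do.

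The genuine gaps are in the steps you defer. First, the "pass to Gaussian smoothing" is not a CLT/Edgeworth problem at all: because the $w_i$ are Gaussian and the $x_i$ are independent signs, $\sum_{i>k}w_ix_i$ is \emph{exactly} $\mathcal{N}(0,(n-k)/n)$, so integrating out the coordinates outside $\{1,\dots,k\}$ yields exactly $\E_{x_{\le k},w_{\le k},b}[M_k(x)\,\Sigma_{1-k/n}(G)]$ with $G=\sum_{i\le k}w_ix_i+b$. Missing this, you propose two substitutes that both fail or get very painful: an Edgeworth expansion tracking a Fourier coefficient to order $n^{-(k+1)}$, or substituting $\sigma=\sum_m d_mH_m$ inside $\E_x[\cdot]$ — but that series converges only in $L^2(\mathcal{N}(0,1))$, and for fixed $w,b$ the law of $w^Tx+b$ is discrete, so the interchange is not justified (and genuinely breaks for discontinuous $\sigma$ such as $\sign$, which the theorem must cover). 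Second, your claim that the expansion is "a convergent/asymptotic series dominated by its first nonzero term" is false in general: $\Sigma_{1-k/n}$ is $C^\infty$ but need not be analytic, so one must use a \emph{finite} Taylor expansion with Lagrange remainder and prove the remainder is $O(n^{-P/2-1/2})$; the paper does this via uniform polynomial bounds on all derivatives $\Sigma_v^{(m)}$ for $v\in[1/2,1]$ together with a Gaussian tail estimate for the event $|G|>D$ (its Lemmas~\ref{lemma:error_first_term} and~\ref{lemma:error_second_term}). Third, the Taylor coefficients you need are those of $\Sigma_{1-k/n}$, not of $\Sigma$, so you must also show $|\Sigma^{(m)}_{1-k/n}(0)-\Sigma^{(m)}(0)|=O(1/n)$ (the paper's Lemma~\ref{def:sigma-properties}(iii)) — without this, when $d_k=0$ the coefficient $a_{k,n}$ is merely small rather than zero, and you need it to be $O(1/n)$ for the degree-$k$ term not to swamp the degree-$(k+1)$ one. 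As it stands the proposal is a plausible outline whose unproved steps are precisely the technical content of the proposition.
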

The proof of \Cref{prop:monomials} is our main technical contribution.
Since the magnitude of the correlations is quite small
(in general, of the order $n^{-k}$ for monomials of degree $k$),
careful calculations are required to establish
our lower bounds.

In fact, we conjecture that any polynomially bounded function that is not a polynomial (almost everywhere) is correlating.

Then, we show that $\INAL(f,\sigma)$ decomposes
into monomial INALs according to its Fourier coefficients:

\begin{proposition}
\label{prop:general_functions}
For any $f:\{\pm 1\}^n\to\mathbb{R}$ and any activation $\sigma$,
\begin{align}
    \INAL(f,\sigma) := \sum_{T \in [n]}\hat f(T)^2 \INAL (M_T, \sigma)\;.
\end{align}
\end{proposition}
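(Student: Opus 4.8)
The plan is to expand everything in the Fourier basis and swap the order of summation and expectation. Write $f = \sum_{T \subseteq [n]} \hat f(T) M_T$. Fix the weight vector $w$ and bias $b$ and let $\phi_{w,b}(x) := \sigma(w^Tx + b)$, which (as a function on the Boolean cube) has its own Fourier expansion $\phi_{w,b} = \sum_{S \subseteq [n]} \widehat{\phi_{w,b}}(S) M_S$. By orthonormality of the $M_S$ under $\cU^n$, the inner correlation is $\langle f, \phi_{w,b}\rangle = \sum_{T} \hat f(T)\, \widehat{\phi_{w,b}}(T)$. Squaring this and taking expectation over $(w,b)$ gives
\begin{align}
\INAL(f,\sigma) = \E_{w,b}\Big[\Big(\sum_T \hat f(T)\,\widehat{\phi_{w,b}}(T)\Big)^2\Big]
= \sum_{T,T'} \hat f(T)\hat f(T')\, \E_{w,b}\big[\widehat{\phi_{w,b}}(T)\,\widehat{\phi_{w,b}}(T')\big].
\end{align}
So the claimed identity reduces to showing the cross terms vanish, i.e.\ $\E_{w,b}[\widehat{\phi_{w,b}}(T)\,\widehat{\phi_{w,b}}(T')] = 0$ whenever $T \neq T'$, and that the diagonal term equals $\INAL(M_T,\sigma)$. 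The diagonal is immediate from the definition: $\E_{w,b}[\widehat{\phi_{w,b}}(T)^2] = \E_{w,b}[\langle M_T, \phi_{w,b}\rangle^2] = \INAL(M_T,\sigma)$.

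For the off-diagonal vanishing, the key observation is a sign symmetry of the Gaussian initialization. Since $w$ has i.i.d.\ $\cN(0,1/n)$ coordinates and $b$ is independent $\cN(0,1/n)$, the distribution of $(w,b)$ is invariant under flipping the sign of any coordinate $w_i$. Flipping $w_i \mapsto -w_i$ has the same effect on $\phi_{w,b}(x) = \sigma(w^Tx+b)$ as flipping the input coordinate $x_i \mapsto -x_i$; under such an input flip, a Fourier coefficient $\widehat{\phi}(S)$ picks up a factor $(-1)^{\mathds{1}[i\in S]}$. Hence $\E_{w,b}[\widehat{\phi_{w,b}}(T)\,\widehat{\phi_{w,b}}(T')]$ is multiplied by $(-1)^{\mathds{1}[i\in T] + \mathds{1}[i \in T']}$ under the (distribution-preserving) change of variables, so it must be zero unless $i \in T \Leftrightarrow i \in T'$ for every $i$, i.e.\ unless $T = T'$. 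Summing the surviving diagonal terms yields $\INAL(f,\sigma) = \sum_T \hat f(T)^2\,\INAL(M_T,\sigma)$.

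I do not expect a serious obstacle here; the only points needing a little care are (i) justifying the interchange of $\E_{w,b}$ with the (finite) Fourier sum and with squaring, which is fine since the Boolean cube is finite and $\sigma$ is polynomially bounded so all moments exist, and (ii) stating the sign-flip symmetry argument cleanly — one should note that the same conclusion also follows by directly writing $\widehat{\phi_{w,b}}(T) = \E_x[M_T(x)\sigma(w^Tx+b)]$, expanding the product of two such expectations over independent copies $x, x'$, and checking that the relevant Gaussian integral $\E_{w,b}[\sigma(w^Tx+b)\sigma(w^Tx'+b)]$ depends on $x,x'$ only through $x^Tx'$ (and $\|x\|,\|x'\|$, which are fixed on the cube), which is a permutation/sign symmetric kernel; either route works, and the sign-flip version is the shortest to write.
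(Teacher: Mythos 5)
Your proposal is correct and follows essentially the same route as the paper: expand $f$ in the Fourier basis, identify the diagonal terms with $\INAL(M_T,\sigma)$, and kill the cross terms using the sign symmetry of the Gaussian initialization (the paper implements the same symmetry via the change of variables $\sign(w_i)x_i\mapsto y_i$ for a coordinate $i$ lying in exactly one of the two sets, whereas you flip $w_i\mapsto -w_i$ directly; these are the same argument in different bookkeeping). No gaps.
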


As a corollary, functions with negligible INAL
on correlating activations are high-degree:

\begin{corollary}
\label{cor:INAL_small_high_degree}
Let $\sigma$ be an activation with
$\INAL(M_{k'}, \sigma)=\Omega(n^{-k_0})$
for $k'=0,1,\ldots,k$. Then,
$W^{\le k}(f_n)\le\INAL(f_n, \sigma) O(n^{k_0})$.

In particular, if $\sigma$ is correlating
and $\INAL(f_n, \sigma)$ is negligible,
then $(f_n)$ is high degree.
\end{corollary}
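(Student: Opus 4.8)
The plan is to read off the bound directly from the Fourier decomposition of the INAL furnished by Proposition~\ref{prop:general_functions}, combined with the hypothesis that every low-degree monomial INAL is at least $\Omega(n^{-k_0})$. First I would write, using Proposition~\ref{prop:general_functions},
\begin{align}
\INAL(f_n,\sigma) = \sum_{T\subseteq[n]} \hat f_n(T)^2\,\INAL(M_T,\sigma) \ge \sum_{T:\,|T|\le k} \hat f_n(T)^2\,\INAL(M_{|T|},\sigma),
\end{align}
where I have simply dropped all the (nonnegative) terms of degree larger than $k$ and used that $\INAL(M_T,\sigma)$ depends only on $|T|$ by the permutation-invariance of the Gaussian weight distribution. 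Since by hypothesis $\INAL(M_{k'},\sigma)=\Omega(n^{-k_0})$ for each $k'=0,1,\dots,k$, there is a constant $\delta>0$ (depending on $k$ but not on $n$) with $\INAL(M_{k'},\sigma)\ge \delta n^{-k_0}$ for all these $k'$ and all large $n$; factoring this out gives $\INAL(f_n,\sigma)\ge \delta n^{-k_0}\sum_{|T|\le k}\hat f_n(T)^2 = \delta n^{-k_0} W^{\le k}(f_n)$, which rearranges to $W^{\le k}(f_n)\le \delta^{-1} n^{k_0}\,\INAL(f_n,\sigma)$, i.e.\ the claimed $W^{\le k}(f_n)\le \INAL(f_n,\sigma)\,O(n^{k_0})$.

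For the second part, suppose $\sigma$ is correlating and $\INAL(f_n,\sigma)$ is negligible, and fix an arbitrary constant $k$. Being correlating means precisely that each of $\INAL(M_0,\sigma),\dots,\INAL(M_k,\sigma)$ is noticeable, so there is some $k_0\in\mathbb{N}$ with $\INAL(M_{k'},\sigma)=\Omega(n^{-k_0})$ for all $k'\le k$ (take $k_0$ to be the maximum of the finitely many exponents witnessing noticeability); here I use that $k$ is fixed so this is a finite maximum. Applying the first part, $W^{\le k}(f_n)\le O(n^{k_0})\,\INAL(f_n,\sigma)$. Since $\INAL(f_n,\sigma)$ is negligible, multiplying it by the fixed polynomial $n^{k_0}$ still yields a negligible sequence, so $W^{\le k}(f_n)$ is negligible. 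As $k$ was arbitrary, $(f_n)$ is high-degree by definition.

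There is no real obstacle here: the corollary is a bookkeeping consequence of Proposition~\ref{prop:general_functions} once one knows the monomial INALs are not too small. The only points requiring a word of care are (i) that $\INAL(M_T,\sigma)$ is a function of $|T|$ alone, which is immediate since $w$ and $b$ are i.i.d.\ and hence the joint distribution of $(w,b)$ is invariant under coordinate permutations, so $M_T$ and $M_{T'}$ with $|T|=|T'|$ have equal INAL; and (ii) that "negligible times fixed polynomial is negligible," which is exactly the definition of negligible. Note also that the first statement is quantitative and holds for every $n$ (or all large $n$), with no asymptotic hypothesis on $f_n$, while only the second statement invokes negligibility.
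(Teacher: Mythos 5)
Your proposal is correct and follows essentially the same route as the paper: both arguments read the bound off the decomposition of Proposition~\ref{prop:general_functions}, drop the nonnegative high-degree terms, lower-bound the low-degree monomial INALs by $\Omega(n^{-k_0})$, and rearrange, with the ``in particular'' part obtained by taking a finite maximum of the noticeability exponents and using that a negligible sequence times a fixed polynomial is negligible. The only difference is cosmetic (the paper bounds each level $W^{k'}$ separately before summing, and leaves implicit the permutation-invariance identifying $\INAL(M_T,\sigma)$ with $\INAL(M_{|T|},\sigma)$, which you spell out).
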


Finally, the cross-predictability of $\orb(\overline{f_n})$
is negligible for high degree functions.

\begin{proposition} \label{prop:CP_low}
Let $\epsilon>0$ and $(f_n)$ a family of Boolean functions.
Let $(\overline{f_n})$ denote the family of $N$-extensions
of $f_n$ for $N=n^{1+\epsilon}$,
and consider the uniform distribution on its orbit.

If $(f_n)$ is high degree, then
$\CPMacro$ is negligible.
Furthermore, if for some universal $c$ and every fixed $k$ it holds
$W^{\le k}(f_n)=O(n^{k-c})$, then
$\CPMacro=
O(n^{-\frac{\epsilon}{1+\epsilon}\cdot c})$.
\end{proposition}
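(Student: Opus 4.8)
The plan is to reduce the cross-predictability to an expectation over a single random permutation of $[N]$, pass to the Fourier side, exploit that an $N$-extension has Fourier support confined to subsets of $[n]$ — which a uniformly random permutation of $[N]$ almost never maps back into $[n]$ — and finally optimise a degree threshold to extract the quantitative rate. First, since the inner product under $\cU^N$ is invariant under simultaneously permuting the coordinates of both arguments, for iid uniform $\pi,\pi'\in S_N$ one has $\langle\overline{f_n}\circ\pi,\overline{f_n}\circ\pi'\rangle=\langle\overline{f_n}\circ\rho,\overline{f_n}\rangle$ with $\rho=(\pi')^{-1}\pi$ again uniform on $S_N$; moreover the pushforward of the uniform measure on $S_N$ under $\pi\mapsto\overline{f_n}\circ\pi$ is exactly the uniform distribution on $\orb(\overline{f_n})$. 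Hence
\begin{align*}
\CPMacro=\E_{\rho\sim S_N}\big[\langle\overline{f_n}\circ\rho,\overline{f_n}\rangle^2\big].
\end{align*}
Using Plancherel, the identity $\widehat{g\circ\rho}(S)=\widehat g(\rho^{-1}(S))$, and $\widehat{\overline{f_n}}(S)=\widehat{f_n}(S)\,\mathbf{1}[S\subseteq[n]]$, this rewrites as
\begin{align*}
\langle\overline{f_n}\circ\rho,\overline{f_n}\rangle=\sum_{T\subseteq[n],\ \rho(T)\subseteq[n]}\widehat{f_n}(T)\,\widehat{f_n}(\rho(T)).
\end{align*}
The only probabilistic input needed is that for $|T|=k$ the image $\rho(T)$ is a uniformly random $k$-subset of $[N]$, so $\pr_\rho[\rho(T)\subseteq[n]]=\binom nk/\binom Nk\le(n/N)^k=n^{-\epsilon k}$.

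Next I fix a threshold $k_0$ and split $f_n=g+h$ into its parts of degree $\le k_0$ and $>k_0$. Since characters of distinct sizes are orthogonal and permutations preserve sizes, the mixed terms $\langle\overline g\circ\rho,\overline h\rangle$ and $\langle\overline h\circ\rho,\overline g\rangle$ vanish identically, so $\langle\overline{f_n}\circ\rho,\overline{f_n}\rangle=\langle\overline g\circ\rho,\overline g\rangle+\langle\overline h\circ\rho,\overline h\rangle$ and $\CPMacro\le2\,\E_\rho[\langle\overline g\circ\rho,\overline g\rangle^2]+2\,\E_\rho[\langle\overline h\circ\rho,\overline h\rangle^2]$. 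To each term I apply Cauchy–Schwarz in the sum over $T$; after reindexing by $S=\rho(T)$ one factor is at most $W^{\le k_0}(f_n)$ in the low-degree case (resp.\ $\le W^{>k_0}(f_n)\le1$ in the high-degree case), and then I take $\E_\rho$ of the other factor using the probability bound above. For the high-degree term every surviving $T$ has $|T|\ge k_0+1$, which gives $\E_\rho[\langle\overline h\circ\rho,\overline h\rangle^2]\le n^{-\epsilon(k_0+1)}$; for the low-degree term the crude bound $\pr_\rho[\rho(T)\subseteq[n]]\le1$ gives $\E_\rho[\langle\overline g\circ\rho,\overline g\rangle^2]\le W^{\le k_0}(f_n)^2\le W^{\le k_0}(f_n)$. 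Altogether, for every $k_0$,
\begin{align*}
\CPMacro\le2\,W^{\le k_0}(f_n)+2\,n^{-\epsilon(k_0+1)}.
\end{align*}

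It remains to choose the threshold. For the qualitative statement, given any $c$ I take $k_0=\lceil c/\epsilon\rceil$: then $n^{-\epsilon(k_0+1)}=o(n^{-c})$, and $W^{\le k_0}(f_n)$ is negligible because $(f_n)$ is high degree and $k_0$ is fixed, so $\CPMacro=o(n^{-c})$; as $c$ was arbitrary, $\CPMacro$ is negligible. For the quantitative statement, assuming $W^{\le k}(f_n)=O(n^{k-c})$ for every fixed $k$, I take $k_0=\lfloor c/(1+\epsilon)\rfloor$, so that $k_0-c\le-\tfrac{\epsilon}{1+\epsilon}c$ and $\epsilon(k_0+1)\ge\tfrac{\epsilon}{1+\epsilon}c$, making both terms $O\!\big(n^{-\frac{\epsilon}{1+\epsilon}c}\big)$ (the small regime $c<1+\epsilon$, where $k_0=0$, is checked by hand).

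The conceptual steps above are short; I expect the delicate part to be this last quantitative bookkeeping together with the Fourier-support computation. One must retain the factor $n^{-\epsilon|T|}$ and not lose it to a union bound over the $\Theta(n)$ possible degrees, and it is exactly the degree split — which makes the cross terms vanish — that lets the low- and high-degree contributions be balanced so that optimising $k_0$ produces the exponent $\tfrac{\epsilon}{1+\epsilon}c$ with no spurious $\poly(n)$ loss.
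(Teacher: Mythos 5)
Your proposal is correct and follows essentially the same route as the paper: pass to the Fourier side, use that the extension's spectrum is supported on subsets of $[n]$ together with $\pr_\rho\left[\rho(T)\subseteq[n]\right]=\binom{n}{|T|}/\binom{N}{|T|}\le n^{-\epsilon|T|}$, and optimize a degree threshold $k_0\approx c/(1+\epsilon)$. The only cosmetic difference is that you split $f_n$ into low- and high-degree parts before applying Cauchy--Schwarz (noting the cross terms vanish), whereas the paper applies Cauchy--Schwarz once and then splits the resulting sum $\sum_T\hat f(T)^2\,\pr_\pi\left[\pi(T)\subseteq[n]\right]$ at degree $k$; both yield the same bound up to constants.
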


\begin{theorem}[\cite{abbe2020poly}, informal] \label{thm:AS20}
If the cross-predictability of a class of functions
is negligible, then noisy GD cannot learn it in poly-time.
\end{theorem}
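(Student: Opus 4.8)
The plan is to show that a polynomial-length run of noisy GD cannot extract information about the specific target $F$ when $F$ is drawn from a class $\mathcal{C}$ with negligible cross-predictability, by proving that the entire \emph{transcript} of the algorithm is statistically almost independent of $F$. Concretely, I would model one step of noisy GD as an oracle that, given the current parameters $\Theta^t$, returns the population gradient of the loss, clipped coordinatewise to $[-A,A]$, with an independent $\mathcal{N}(0,\tau^2)$ perturbation added to each of the $E$ coordinates. I would compare this \emph{true} oracle (which depends on $F$) against a \emph{null} oracle that returns the class-averaged gradient $\E_F[\,\cdot\,]$ plus the same noise, and argue that the output $\NN^{(T)}$, being a function of the transcript alone, correlates with $F$ by no more than the statistical distance between the two transcript laws plus the (negligible) correlation achieved against the null.

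The key quantitative step is a per-step divergence bound. Since each coordinate of the gradient is perturbed by $\mathcal{N}(0,\tau^2)$ noise, shifting that coordinate by $\delta$ changes the law by KL divergence $\delta^2/(2\tau^2)$; by the chain rule for KL along the $T$ steps, the divergence between the true and null transcripts is at most $\frac{1}{2\tau^2}\sum_{t=1}^{T}\E\big[\lVert G_t(\Theta^t,F)-\overline{G}_t(\Theta^t)\rVert^2\big]$, where $G_t$ is the clipped gradient and $\overline{G}_t$ its $F$-average. Averaging over $F$, each summand becomes $\sum_{e=1}^{E}\Var_F\big(G_{t,e}\big)$. The crucial structural fact is that the dependence of each gradient coordinate on the target enters only through a correlation $\langle F,h_{t,e}\rangle$ for a network-derived function $h_{t,e}$ that does \emph{not} itself depend on $F$ (for a fixed trajectory). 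For any such fixed $h$, writing $M=\E_F[F\otimes F]$ for the second-moment operator of the class, one has $\E_F\langle F,h\rangle^2=\langle h,Mh\rangle\le\lVert M\rVert_{\mathrm{op}}\,\lVert h\rVert^2\le\lVert M\rVert_{\mathrm{Fro}}\,\lVert h\rVert^2=\sqrt{\CP(\mathcal{C})}\,\lVert h\rVert^2$, which is exactly where the cross-predictability controls the signal.

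Combining these, the total divergence is bounded by $\poly(n)\cdot\sqrt{\CP(\mathcal{C})}/\tau^2$ — the clipping range $A$ bounds the relevant norms, while the factors $T$, $E$, $\gamma$ account for the horizon, the number of coordinates, and the learning rate — so by Pinsker the transcript statistical distance, and hence the achievable correlation $g(n)$, is $O\big(\frac{\gamma T\sqrt{E}A}{\tau}\,\CP(\mathcal{C})^{1/4}\big)$, which is negligible whenever $\CP$ is. The hard part is that the network-derived functions $h_{t,e}$ are in reality \emph{not} independent of $F$: they are computed along the true, $F$-dependent trajectory, so the clean fixed-$h$ bound above does not apply verbatim. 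Resolving this requires an inductive coupling argument — assuming the true and null trajectories have stayed close through step $t$, one shows the $h_{t,e}$ are near their $F$-independent counterparts, applies the $\sqrt{\CP}$ bound, and concludes the trajectories remain close at step $t+1$ — and carefully controlling the accumulation of this $F$-dependence across the polynomial horizon is the technical heart of the argument, which is the content of~\cite{abbe2020poly}.
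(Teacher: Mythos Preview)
This theorem is not proved in the present paper: it is stated informally as a citation of~\cite{abbe2020poly}, and the precise quantitative version is restated (again without proof) in the appendix as \cref{thm:abbe-sandon}. The paper uses the result purely as a black box in step~4 of the proof outline and in the derivation of \cref{cor:learning}, so there is no in-paper argument for your proposal to be compared against.

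As a standalone sketch of the cited result, your plan is sound. The comparison to a class-averaged null oracle, the chain rule for KL under per-coordinate Gaussian noise, the bound $\E_F\langle F,h\rangle^2=\langle h,Mh\rangle\le\|M\|_{\mathrm{HS}}\|h\|^2=\sqrt{\CP}\,\|h\|^2$ with $M=\E_F[F\otimes F]$, and the Pinsker step correctly reproduce the $\CP^{1/4}$ rate that appears in \cref{thm:abbe-sandon}. You also correctly isolate the genuine obstacle --- the $h_{t,e}$ are computed along the $F$-dependent trajectory, so the fixed-$h$ variance bound does not apply directly --- and correctly defer its resolution to the coupling machinery of~\cite{abbe2020poly}, which is exactly what this paper does as well. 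One minor bookkeeping difference: a pure transcript-KL argument does not produce an explicit learning-rate factor $\gamma$ (the map from transcript to parameters is deterministic post-processing), whereas the bound quoted here, $\frac{\gamma T\sqrt{E}A}{\tau}\CP^{1/4}$, carries $\gamma$. The junk-flow formulation in~\cite{abbe2020poly} tracks parameter deviations rather than transcript laws, which is where $\gamma$ enters; your route reaches the same negligibility conclusion with the prefactors organized slightly differently.
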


We provide here an outline of the proof of Proposition~\ref{prop:monomials}, and refer to Appendix~\ref{app:proof_monomials} for a detailed proof. We further prove Proposition~\ref{prop:CP_low}
and
Theorem~\ref{thm:mainthm}.
The proofs of the remaining results are in appendices.

\paragraph{Proof of Proposition~\ref{prop:monomials} (outline).}
The main goal of the proof is to estimate the dominant term (as $n$ approaches infinity) of $\INAL(M_k,\sigma)$, and show that it is indeed noticeable, for any fixed $k$.
We initially use Jensen inequality to lower bound the INAL with the following
\begin{align} \label{eq:CS_outline}
    \INAL(M_k,\sigma ) \geq \E \left[ \E_{|\theta|,x}\left[M_k(x) \sigma({w}^T x+b) \mid \sign(\theta) \right]^2  \right],
\end{align}
where for brevity we denoted $\theta =(w,b)$, $|\theta|$ and $\sign(\theta)$ are $(n+1)$-dimensional vectors such that $|\theta|_i = | \theta_i|$ and $ \sign(\theta)_i = \sign(\theta_i)$, for all $i \leq n+1$. By denoting $|w|_{> k}, x_{> k}$ the coordinates of $|w|$ and $x$ respectively that do not appear in $M_k$, and by $G: = \sum_{i=1}^k w_i x_i + b$ we observe that
\begin{align}
    \E_{|w|_{> k}, x_{> k}} [\sigma(w^T x+b)  ] = \E_{Y \sim \cN(0, 1-\frac kn) }\left[ \sigma(G + Y) \right],
\end{align}
since $\sum_{i=k+1}^n w_i x_i$ is indeed distributed as $\cN(0, 1-\frac kn) $.
We call the RHS the ``$n$-Gaussian smoothing'' of $\sigma$ and we denote it by $\Sigma_n (z):= \E_{Y \sim \cN(0, 1-\frac kn) }\left[ \sigma(z + Y) \right]$. We will compare it to
the ``ideal'' Gaussian smoothing denoted by
$\Sigma(z):=\E_{Y\sim\cN(0,1)}[\sigma(z+Y)]$.

For polynomially bounded $\sigma$, we can prove that $\Sigma_n$ has some nice properties (see Lemma~\ref{def:sigma-properties}), specifically it is $C^{\infty}$ and polynomially bounded and it uniformly converges to $\Sigma$ as $n \to \infty$. These properties crucially allow to write $\Sigma_n$ in terms of its Taylor expansion around $0$, and bound the coefficients of the series for large $n$. In fact, we show that there exists a constant $P>k$, such that if we split the Taylor series of $\Sigma_n$ at $P$ as
\begin{align}
 \Sigma_n(G) = \sum_{\nu=0}^P a_{\nu,n} G^\nu + R_{P,n}(G),
\end{align}
(where $a_{\nu,n}$ are the Taylor coefficients and $ R_{P,n}$ is the remainder in Lagrange form), and take the expectation over $|\theta|_{\leq k}$ as:
\begin{align}
    \E_{|\theta|_{\leq k},x_{\leq k}} \left[ M_k(x) \Sigma_n(G) \right]  &= \sum_{\nu =0}^P a_{\nu,n}  \E_{|\theta|_{\leq k},x_{\leq k}} \left[ M_k(x) G^{\nu} \right] +  \E_{|\theta|_{\leq k},x_{\leq k}} \left[ R_{P,n}(G) \right]\\
    & =: A + B\label{eq:taylor_split_main},
\end{align}
then $A$ is $\Omega(n^{-P/2})$ (Proposition~\ref{prop:dominant_term}), and $B$ is $O(n^{-P/2-1/2})$ (Proposition~\ref{prop:error_term}), uniformly for all values of $\sign(\theta)$. For $A$ we use the observation that $\E_{|\theta|_{\leq k},x_{\leq k}} \left[ M_k(x) G^{\nu} \right]=0$ for all $\nu <k$ (Lemma~\ref{lem:moment-formula}), and the fact that $|a_{P,n}| >0$ for $n$ large enough (due to hypothesis b in Definition~\ref{def:expressive_ab} and the continuity of $\Sigma_n$ in the limit of $n \to \infty$, given by Lemma~\ref{def:sigma-properties}). For $B$, we combine the concentration of Gaussian moments and the polynomial boundness of all derivatives of $\Sigma_n$.


Taking the square of~\eqref{eq:taylor_split_main} and going back to~\eqref{eq:CS_outline}, one can immediately conclude that $\INAL(M_k,\sigma)$ is indeed noticeable.

\subsection{Proof of Proposition~\ref{prop:CP_low}}
Let $f=f_n$ and
let $\hat f$ be the Fourier coefficients of the original function $f$, and let $\hat h$ be the coefficients of the augmented function $\bar f$. Recall
that $\bar f:\{\pm 1\}^{N} \to \{\pm 1\}$ is such that $\bar f(x_1,...,x_n,x_{n+1},...,x_{N}) = f(x_1,...,x_n)$. Thus, the Fourier coefficients of $\bar f$ are
\begin{align}\hat h(T)&=\begin{cases}
    \hat f(T) & \text{ if } T \subseteq [n],\\
    0 & \text{ otherwise}.
\end{cases}\end{align}
Let us proceed to bounding the cross-predictability.
Below we denote by $\pi$ a random permutation of $N$ elements:
\begin{align}
   \CPMacro &= \E_{\pi}\left[ \E_x \left[ \bar f(x) \bar f(\pi(x))\right]^2\right]\\
    & = \E_{\pi}\left[\left( \sum_{T\subseteq [N]} \hat h(T) \hat h  (\pi(T)) \right)^2\right] \\
    & = \E_{\pi}\left[\left( \sum_{T \subseteq [n]} \hat f(T) \hat h (\pi(T))  \cdot \mathds{1}\left(\pi(T) \subseteq [n]\right) \right)^2\right] \\
    & \overset{C.S}{\leq} \E_{\pi}\left[
    \left(\sum_{S\subseteq[n]} \hat h(\pi(S))^2  \right) \right.\cdot \left.\left( \sum_{T\subseteq[n]} \hat f(T)^2 \mathds{1}\left(\pi(T) \subseteq [n]\right) \right)
    \right]\\
    & \leq \sum_{T \subseteq [n]} \hat f(T)^2 \cdot  \pr_\pi \left(\pi(T) \subseteq [n]\right).
\end{align}
Now, for any $k$ we have
\begin{align}
    \CPMacro &\leq \sum_{T: |T|< k} \hat f(T)^2 \cdot  \pr_\pi \left(\pi(T) \subseteq [n]\right)  + \sum_{T: |T| \ge k} \hat f(T)^2 \cdot  \pr_\pi \left(\pi(T) \subseteq [n]\right) \\
    & \leq W^{< k}(f) + \pr_\pi \left(\pi(T) \subseteq [n] \mid |T| = k \right)\;,\label{eq:02}
\end{align}
where the second term in~\eqref{eq:02} is further bounded by
(recall that $N = n^{1+\epsilon}$):
\begin{align}\label{eq:03}
    \pr_\pi \left(\pi(T) \subseteq [n] \mid |T| = k \right) & =
    \frac{\binom{n}{k}}{\binom{N}{k}} \\
    & \leq \frac{\left(\frac{ne}{k}\right)^{k}}{\left(\frac{N}{k}\right)^{k}} \\
    &= e^{k} \frac{n^{k}}{N^{k}} = e^{k} n^{-\epsilon\cdot k}.
\end{align}
Accordingly, for any $k\in\mathbb{N}_{>0}$ it holds that
\begin{align}\label{eq:04}
    \CPMacro\le
    W^{< k}(f)+e^k n^{-\epsilon k}\;.
\end{align}
Now, if $(f_n)$ is a high degree sequence of Boolean functions,
then $W^{< k}(f)$ is negligible for every $k$, and therefore
the cross-predictability in~\eqref{eq:03} is $O(n^{-k})$ for every
$k$, that is the cross-predictability is negligible as we claimed.

On the other hand, if for some $c$ and every $k$ it holds that
$W^{\le k}(f_n)=O(n^{k-c})$, then we can choose
$k_0:=\frac{c}{1+\epsilon}$ and apply~\eqref{eq:04} to get
$\CPMacro=O(n^{-\frac{\epsilon}{1+\epsilon}\cdot c})$.

\subsection{Proof of Theorem \ref{thm:mainthm}}
\label{app:main-proof}

Let $\sigma$ be an expressive activation and let $(f_n)$ be a sequence
of Boolean functions with negligible $\INAL(f_n,\sigma)$.
By \Cref{prop:monomials}, $\sigma$ is correlating,
and by \Cref{cor:INAL_small_high_degree} $(f_n)$ is high-degree.
Therefore, by \Cref{prop:CP_low}, the cross-predictability
$\CPMacro$ is negligible.

For the more precise statement,
let $(f_n)$ be a sequence of Boolean functions
with $\INAL(f_n,\sigma)=O(n^{-c})$.
By \Cref{prop:monomials}, $\sigma$ is 1-strongly correlating.
That means that for every $k$ we have
$\INAL(M_k,\sigma)=\Omega(n^{-(k+1)})$.
By \cref{cor:INAL_small_high_degree}, for every $k$ it holds
$W^{\le k}(f_n)=O(n^{k+1-c})$. Finally, applying \Cref{prop:CP_low},
we have that $\CPMacro=O(n^{-\frac{\epsilon}{1+\epsilon}(c-1)})$.

\begin{figure*}[ht!]
\centering
\includegraphics[width = \textwidth]{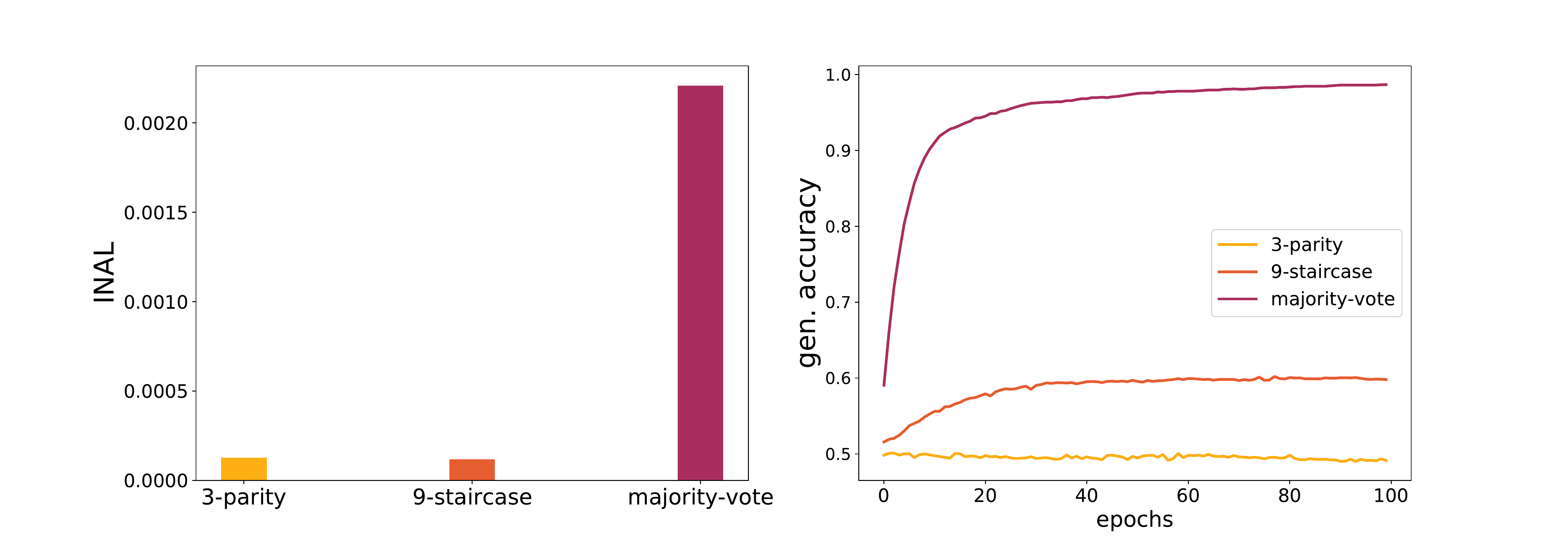}
\caption{Comparison of INAL and generalization accuracy for three Boolean functions. On the left, we estimate the INAL between each target function and a 2-layers ReLU fully connected neural network with normalized gaussian initialization. On the right, we train the network to learn each target function with SGD with batch 1000 for 100 epochs. We observe that low INAL is bad news.}
\label{experiments_bool}
\end{figure*}
\begin{figure*}[ht!]
\includegraphics[width = \textwidth]{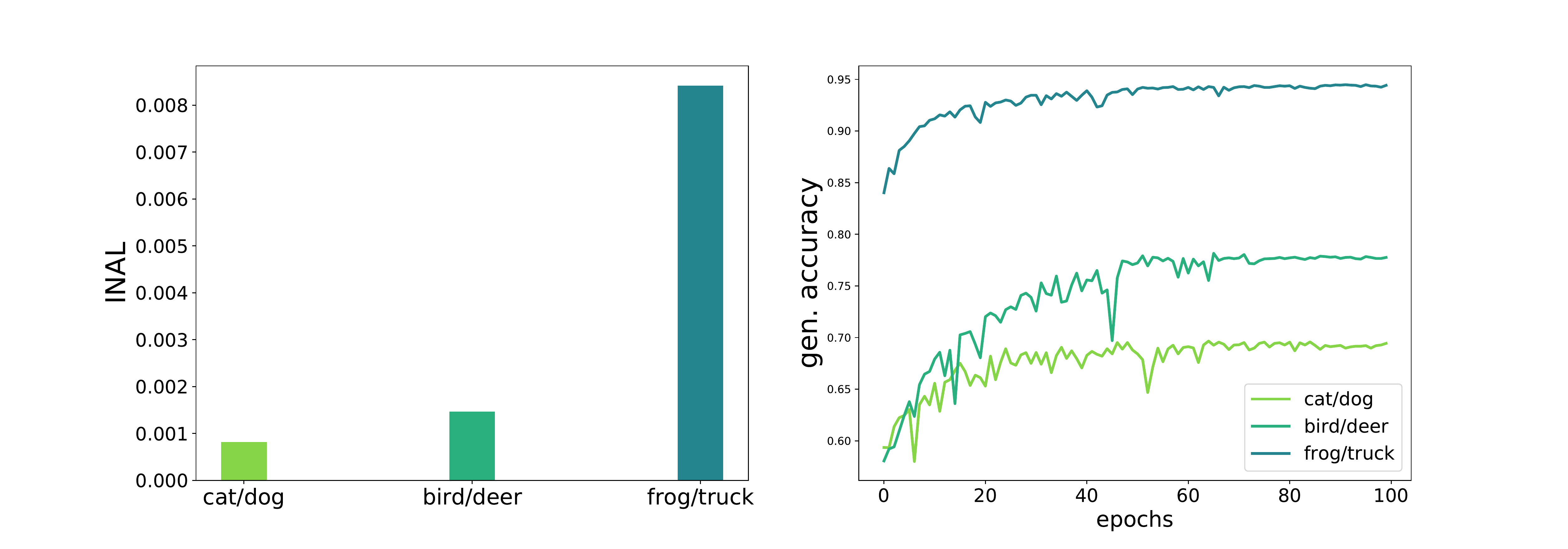}
\caption{Comparison of INAL and generalization accuracy for binary classification in the CIFAR dataset.
On the left, we estimate the INAL between the neural network and the target function associated to each task. On the right we train a CNN with 1 VGG block with SGD with batch size 64 for 100 epochs.
We observe that a significant difference in the INAL corresponds to a significant difference in the generalization accuracy achieved.}
\label{fig:experiments_CIFAR}


\end{figure*}

\section{Experiments}\label{exper}
In this section we present a few experiments to show how the INAL can be estimated in practice. Our theoretical results connect the performance of GD to the Fourier spectrum of the target function. However, in applications we are usually given a dataset with data points and labels, rather than an explicit target function, and it may not be trivial to infer the Fourier properties of the function associated to the data. Conveniently, the INAL can be estimated with sufficient datapoints and labels, and do not need an explicit target.

\paragraph{Experiments on Boolean functions.} In our first experiment, we consider three Boolean functions, namely the majority-vote over the whole input space (${\rm Maj}_n(x) := \sign(\sum_{i=1}^n x_i)$),
a $9$-staircase (${\rm S_9(x) := \sign(x_1+x_1x_2+x_1x_2x_3+...+x_1x_2x_3...x_9)}$ and a $3$-parity ($M_3(x) = \prod_{i=1}^3 x_i$), on an input space of dimension $100$. We take a 2-layer fully connected neural network with ReLU activations and normalised Gaussian iid initialization (according to the setting of our theoretical results), and we train it with SGD with batch-size 1000 for 100 epochs, to learn each of the three functions. On the other hand, we estimate the INAL between each of the three targets and the neural network, through Monte-Carlo. Our observations confirm our theoretical claim, i.e. that low INAL is bad news. In fact, for the 3-parity and the 9-staircase, that have very low INAL ($\sim$1/20 of the majority-vote case), GD does not achieve good generalization accuracy after training (Figure~\ref{experiments_bool}).

\paragraph{Experiments on real data.}
Given a dataset $D=(x_m,y_m)_{m \in [M]}$, where $x_m \in \bR^n$, and $y_m \in \bR$, and given a randomly initialized neural network $\NN_{\Theta^0}$ with $\Theta^0 $ drawn from some distribution, we can estimate the initial alignment between the network and the target function associated to the dataset as
\begin{align} \label{eq:empirical_INAL}
     \max_{v \in V_{NN}} \E_{\Theta^0}\left[ \left(\frac{1}{M}\sum_{m =1}^M y_m \cdot  \NN_{\Theta^0}^{(v)}(x_m) \right)^2 \right],
\end{align}
where the outer expectation can be performed through Monte-Carlo approximation.

We ran experiments on the CIFAR dataset.
We split the dataset into 3 different pairs of classes, corresponding to 3 different binary classification tasks (specifically cat/dog, bird/deer, frog/truck). We take a CNN with 1 VGG block and $\ReLU$ activation, and for each task, we train the network with SGD with batch-size 64, and we estimate the INAL according to~\eqref{eq:empirical_INAL}.
We notice that also in this setting (not covered by our theoretical results), the INAL and the generalization accuracy present some correlation, and a significant difference in the INAL corresponds to a significant difference in the accuracy achieved after training.
This may give some motivation to study the INAL beyond the fully connected setting.

\section{Conclusion and Future Work}
There are several directions that can follow from this work. The most relevant would be to extend the result beyond fully connected architectures. As mentioned before, we suspect that our result can be generalized to all architectures that contain a fully connected layer anywhere in the network.
Another direction would be to extend the present work to other continuous distributions of intitial weights (beyond gaussian). As a matter of fact, in the setting of iid gaussian inputs (instead of Boolean inputs), our proof technique extends to all weight initialization distributions with zero mean and variance $O(n^{-1})$. However, in the case of Boolean inputs
that we consider in this paper, this may not be a trivial extension.
Another extension on which we do not touch here
are non-uniform input distributions.

\paragraph{Acknowledgements}
We thank Peter Bartlett for a helpful discussion.

\bibliography{arxiv_revision}
\bibliographystyle{alpha}

\newpage
\appendix
\onecolumn

\section{Proof of Proposition~\ref{prop:monomials}}
\label{app:proof_monomials}

For an activation $\sigma:\bR \to \bR$, we denote its $v$-Gaussian smoothing as
\begin{align}
    \Sigma_v(t) := \E_{Y \sim \cN(0,v)}[\sigma(Y+t)].
\end{align}
We also write $\Sigma:=\Sigma_1$ for brevity.
As mentioned, we will be working with functions
that are \emph{polynomially bounded}, ie., such that
there exists a polynomial $P$ with $|\sigma(x)|< P(x)$ holding for all
$x\in\mathbb{R}$. We will use the fact that such polynomial can be assumed
wlog to be of the form $|\sigma(x)|<Cx^{\ell}+C$ for some $C>0$ and
$\ell\in\mathbb{N}_{\ge 0}$ (since any polynomial can be upper bounded by
a polynomial of such form).
Note that if $\sigma$ is a measurable, polynomially bounded function,
then $\Sigma_v$
is well defined for every $v>0$.

We now state the intermediate step in the proof of \Cref{prop:monomials}:
\begin{lemma}[Conditions on $\Sigma$ and $\Sigma_v$]\label{def:sigma-properties}
If $\sigma$ is a measurable, polynomially bounded function, then it satisfies the
following conditions:
\begin{enumerate}
    \item[i)] $\Sigma_v\in C^{\infty}(\bR)$
    for every $v>0$;
    \item[ii)] For every $k\in\mathbb{N}_{\ge 0}$
    and $v>0$,
    $ \Sigma_{v}^{(k)} (t) := \frac{\partial^k}{\partial t^k}\Sigma_{v}(t)$ is polynomially bounded. Furthermore,
    this bound is uniform,
    that is, $|\Sigma_v^{(k)}(t)|<Ct^{\ell}+C$
    holds for every $t\in\mathbb{R}$ and
    every $1/2\le v\le 1$, for some $C,\ell$
    that do not depend on $v$.
    \item[iii)] For all $k\in\mathbb{N}_{\ge 0}$, it holds
    $|\Sigma^{(k)}_{1-\epsilon} (0) - \Sigma^{(k)}(0) | = O (\epsilon)$.
\end{enumerate}
\end{lemma}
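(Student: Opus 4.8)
The plan is to realize the $v$-Gaussian smoothing as a convolution, $\Sigma_v(t)=(\sigma*\phi_v)(t)=\int_{\bR}\sigma(u)\,\phi_v(t-u)\,du$, where $\phi_v(w)=(2\pi v)^{-1/2}e^{-w^2/(2v)}$ is the density of $\cN(0,v)$, and to push every derivative in $t$ (and later in $v$) onto the Gaussian kernel, whose smoothness and rapid decay dominate the polynomial growth of $\sigma$. For part (i) I would differentiate under the integral sign, using the identity $\partial_t^k\phi_v(t-u)=(-1)^k v^{-k/2}H_k\big((t-u)/\sqrt v\big)\phi_v(t-u)$, where $H_k$ is the degree-$k$ probabilist's Hermite polynomial. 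For $t$ in any bounded interval and $v>0$ fixed, the integrand $\sigma(u)\,\partial_t^k\phi_v(t-u)$ is dominated, uniformly in $t$, by an integrable function (a polynomial in $u$ times a Gaussian in $u$, invoking $|\sigma(u)|\le C|u|^{\ell}+C$); the standard differentiation-under-the-integral lemma then gives $\Sigma_v\in C^\infty(\bR)$ together with $\Sigma_v^{(k)}(t)=\int_{\bR}\sigma(u)\,\partial_t^k\phi_v(t-u)\,du$.

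For part (ii), changing variables $z=(t-u)/\sqrt v$ in the formula from (i) yields
\[
  \Sigma_v^{(k)}(t)=(-1)^k\,v^{-k/2}\;\E_{Z\sim\cN(0,1)}\big[\sigma(t-\sqrt v\,Z)\,H_k(Z)\big].
\]
Inserting $|\sigma(t-\sqrt v Z)|\le C|t-\sqrt v Z|^{\ell}+C\le 2^{\ell}C\big(|t|^{\ell}+v^{\ell/2}|Z|^{\ell}\big)+C$ and using, for $1/2\le v\le 1$, that $v^{\ell/2}\le 1$ and $v^{-k/2}\le 2^{k/2}$, the right-hand side is bounded by a $v$-independent constant times $\big(|t|^{\ell}+1\big)\cdot\E\big[(1+|Z|^{\ell})|H_k(Z)|\big]$, and the last Gaussian moment is finite. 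This gives $|\Sigma_v^{(k)}(t)|\le C'(|t|^{\ell}+1)$ with $C',\ell$ independent of $v\in[1/2,1]$, which is exactly the uniform polynomial bound claimed.

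For part (iii), I would use the Gaussian-semigroup (heat-equation) identity: since $\partial_v\phi_v=\tfrac12\phi_v''$ as a function of the space variable, one more differentiation under the integral in $v$ gives, for every $v>0$, $\partial_v\Sigma_v^{(k)}(t)=\tfrac12\,\Sigma_v^{(k+2)}(t)$. Hence for $0<\epsilon\le 1/2$,
\[
  \Sigma^{(k)}(0)-\Sigma_{1-\epsilon}^{(k)}(0)=\int_{1-\epsilon}^{1}\partial_v\Sigma_v^{(k)}(0)\,dv=\tfrac12\int_{1-\epsilon}^{1}\Sigma_v^{(k+2)}(0)\,dv,
\]
and by part (ii) applied with $k+2$ in place of $k$ and $t=0$, the integrand is bounded in absolute value by a constant uniformly for $v\in[1/2,1]$, so the whole expression is $O(\epsilon)$. (A direct comparison of the two integral representations of $\Sigma_{1-\epsilon}^{(k)}(0)$ and $\Sigma^{(k)}(0)$ would also work but is messier.)

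The main obstacle is not any single deep step: the work lies in carefully justifying the three differentiations under the integral sign — in $t$ for (i)--(ii), and jointly in $t$ and $v$ for (iii) — and in verifying that the polynomial bound in (ii) is genuinely uniform over $v\in[1/2,1]$; both are routine given polynomial boundedness of $\sigma$ and Gaussian tail decay, but they are where the bookkeeping concentrates. The one structural idea worth isolating is the heat-equation identity $\partial_v\Sigma_v=\tfrac12\Sigma_v''$ used in (iii), which converts the comparison of two distinct smoothings into an integral of a single higher derivative already controlled by part (ii).
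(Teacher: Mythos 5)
Your proof is correct, and for part (iii) it takes a genuinely different and cleaner route than the paper. For (i) and (ii) the difference is mostly cosmetic: the paper splits the kernel as $\phi_v=\phi_{v/2}*\phi_{v/2}$ (and, for the uniform bound, as $\phi_{1/4}*\phi_{v-1/4}$) so that all derivatives land on an explicit Gaussian factor, whereas you differentiate the single kernel directly via the Hermite identity and then change variables to get $\Sigma_v^{(k)}(t)=(-1)^k v^{-k/2}\,\E_Z\big[\sigma(t-\sqrt{v}\,Z)H_k(Z)\big]$; your representation makes the uniformity over $v\in[1/2,1]$ essentially immediate (the only $v$-dependence is the harmless factors $v^{-k/2}\le 2^{k/2}$ and $v^{\ell/2}\le 1$), and it also sidesteps the paper's slightly delicate claim that $\sigma*\phi_{v/2}\in L_1(\bR)$. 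The real divergence is in (iii): the paper compares the two integral representations of $\Sigma_{1-\epsilon}^{(k)}(0)$ and $\Sigma^{(k)}(0)$ head-on, expanding both the polynomial prefactor $P_{1-v/2,k}$ and the density $\phi_{1-v/2}$ to first order in $\epsilon$ — a computation spanning about a page. Your heat-equation identity $\partial_v\Sigma_v^{(k)}=\tfrac12\Sigma_v^{(k+2)}$ reduces the same estimate to $\tfrac12\int_{1-\epsilon}^1\Sigma_v^{(k+2)}(0)\,dv$, whose integrand is bounded uniformly on $[1/2,1]$ by part (ii), giving $O(\epsilon)$ in two lines. What the paper's explicit expansion buys is only the concrete form of the error polynomial, which is never used; your argument buys brevity and reuses (ii) structurally. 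The only obligations you correctly flag but do not spell out are the three justifications of differentiation under the integral sign (in $t$, and jointly in $v$), all of which follow from the same Gaussian-times-polynomial domination; there is no gap.
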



\Cref{def:sigma-properties} is then used in the proof of

\begin{lemma}\label{prop:smooth-is-correlating}
Let $\sigma$ be expressive (according to Definition~\ref{def:expressive_ab}). Then, for every $k\ge 0$ and $P\ge k$ such that
$\Sigma^{(P)}(0)\ne 0$, it holds that
$\INAL(M_k, \sigma)=\Omega(n^{-P})$.
\end{lemma}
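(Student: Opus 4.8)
\textbf{Proof plan for Lemma~\ref{prop:smooth-is-correlating}.}

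The plan is to follow the outline given in the body of the paper and carry out the two-term Taylor split carefully. Fix $k\ge 0$ and $P\ge k$ with $\Sigma^{(P)}(0)\ne 0$; the goal is a lower bound $\INAL(M_k,\sigma)=\Omega(n^{-P})$. First I would record the reduction already sketched: by Jensen's inequality (conditioning on the signs of $\theta=(w,b)$),
\begin{align}
\INAL(M_k,\sigma)\ge \E_{\sign(\theta)}\left[\Big(\E_{|\theta|,x}\big[M_k(x)\sigma(w^Tx+b)\mid \sign(\theta)\big]\Big)^2\right],
\end{align}
so it suffices to show the inner conditional expectation is $\Omega(n^{-P/2})$ \emph{uniformly} over all sign patterns. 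Next I would integrate out the coordinates not appearing in $M_k$: writing $G=\sum_{i=1}^k w_ix_i+b$, the average over $(w_{>k},x_{>k})$ turns $\sigma(w^Tx+b)$ into $\Sigma_{1-k/n}(G)$, the $(1-k/n)$-Gaussian smoothing of $\sigma$. So the whole problem reduces to lower bounding $\E_{|\theta|_{\le k},x_{\le k}}[M_k(x)\,\Sigma_{n}(G)]$, where I abbreviate $\Sigma_n:=\Sigma_{1-k/n}$.

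The core step is to invoke Lemma~\ref{def:sigma-properties}: $\Sigma_n$ is $C^\infty$, all its derivatives are polynomially bounded uniformly for $1/2\le 1-k/n\le 1$ (i.e. for $n\ge 2k$), and $\Sigma_n^{(m)}(0)\to\Sigma^{(m)}(0)$ as $n\to\infty$ with rate $O(1/n)$. Expand $\Sigma_n$ in Taylor series around $0$ to order $P$ with Lagrange remainder: $\Sigma_n(G)=\sum_{\nu=0}^P a_{\nu,n}G^\nu+R_{P,n}(G)$, where $a_{\nu,n}=\Sigma_n^{(\nu)}(0)/\nu!$. Taking $\E_{|\theta|_{\le k},x_{\le k}}[M_k(x)\cdot(\,\cdot\,)]$ gives $A+B$ as in \eqref{eq:taylor_split_main}. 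For the main term $A$: by Lemma~\ref{lem:moment-formula}, $\E_{|\theta|_{\le k},x_{\le k}}[M_k(x)G^\nu]=0$ for $\nu<k$, and for $\nu=k$ it equals a positive constant (the number of ways to pick $w_1x_1\cdots w_kx_k$ from $G^k$ times $\E[|w_i|]^k\cdot\prod\E[x_i^2]$, all on coordinates with variance $1/n$, hence of order $n^{-k/2}$), and more generally each term with $k\le\nu\le P$ contributes $O(n^{-\nu/2})$ with the $\nu=k$ term dominating as $n^{-k/2}\gg n^{-(k+1)/2}$. Since $a_{P,n}\to\Sigma^{(P)}(0)/P!\ne 0$, but more importantly the $\nu=k$ coefficient $a_{k,n}$ — here I need to be slightly careful: if $\Sigma^{(k)}(0)=0$ the $\nu=k$ term may itself be small. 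This is where the hypothesis $P\ge k$ with $\Sigma^{(P)}(0)\ne0$ and a grouping argument is needed: among the terms $\nu=k,k+1,\dots,P$, one picks the smallest $\nu^*$ with $\Sigma^{(\nu^*)}(0)\ne0$ (exists since $\Sigma^{(P)}(0)\ne0$), and shows that term is $\Omega(n^{-\nu^*/2})$ and dominates all higher ones; since $\nu^*\le P$, this yields $A=\Omega(n^{-P/2})$ — possibly a better bound, but $n^{-P/2}$ suffices. For the remainder $B$: $|R_{P,n}(G)|\le \frac{1}{(P+1)!}\sup_{|\xi|\le|G|}|\Sigma_n^{(P+1)}(\xi)|\cdot|G|^{P+1}\le C'(|G|^{P+1+\ell}+|G|^{P+1})$ by the uniform polynomial bound in Lemma~\ref{def:sigma-properties}(ii), and then Gaussian-type moment concentration for $G$ (whose coordinates have variance $1/n$) gives $\E[|G|^{P+1+\ell}]=O(n^{-(P+1)/2})$, so $B=O(n^{-(P+1)/2})=o(n^{-P/2})$, uniformly over sign patterns. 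Hence $A+B=\Omega(n^{-P/2})$ uniformly, squaring and averaging over $\sign(\theta)$ gives $\INAL(M_k,\sigma)=\Omega(n^{-P})$.

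The main obstacle I anticipate is making the ``uniformly over all sign patterns'' claim airtight and handling the interplay between the vanishing of low-order Hermite/Taylor coefficients of $\Sigma$ and the degree $k$ of the monomial: one must verify that conditioning on signs does not kill the dominant contribution (the $\nu^*$ term survives because $\E_{|\theta|_{\le k}}\big[\prod_{i=1}^k|w_i|\cdot(\text{extra factors})\big]$ stays positive and of the right order regardless of signs) and that the error term estimate is genuinely sign-independent (which it is, since the bound on $|R_{P,n}|$ depends only on $|G|$, whose distribution is sign-independent). A secondary technical point is the passage $a_{\nu,n}\to a_\nu$: one needs $n$ large enough that $|a_{\nu^*,n}|$ is bounded below by a positive constant, which follows from part (iii) of Lemma~\ref{def:sigma-properties}; for such $n$ the bound holds, and absorbing the finitely many small $n$ into the $\Omega(\cdot)$ constant completes the argument.
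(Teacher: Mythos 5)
Your plan follows the paper's proof essentially step for step: the conditional-Jensen reduction uniformly over sign patterns, the integration of the unused coordinates to get the smoothing $\Sigma_{1-k/n}(G)$, the order-$P$ Taylor split combined with the moment formula $\E[M_k(x)G^\nu]$ (zero for $\nu<k$, of order $n^{-\nu/2}$ with a sign-dependent factor for $\nu\ge k$), and the $O(n^{-(P+1)/2})$ remainder bound from the uniform polynomial bounds on $\Sigma_n^{(P+1)}$; your observation that one may reduce to the first index $\nu^*\ge k$ with $\Sigma^{(\nu^*)}(0)\ne 0$ is exactly the paper's opening reduction. The only point to make explicit in a write-up is that for the possible intermediate index $\nu=k<\nu^*$ with $\Sigma^{(k)}(0)=0$ you need the quantitative rate $|a_{k,n}|=|a_{k,n}-a_k|=O(1/n)$ from Lemma~\ref{def:sigma-properties}(iii) — not merely $a_{k,n}\to 0$ — so that $a_{k,n}\cdot O(n^{-k/2})=O(n^{-k/2-1})$ is genuinely dominated by the $\nu^*=k+1$ term of order $n^{-(k+1)/2}$, which is precisely how the paper handles that case.
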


In particular, from \Cref{prop:smooth-is-correlating} it follows that if $\sigma$ is expressive, then it is correlating.
Furthermore, since by condition b) in \Cref{def:expressive_ab}
for every $k$ we have $\Sigma^{(k)}(0)\ne 0$
or $\Sigma^{(k+1)}(0)\ne 0$, by \Cref{prop:smooth-is-correlating}
it holds $\INAL(M_k,\sigma)=\Omega(n^{-(k+1)})$ and
$\sigma$ is $1$-strongly correlating.



In the following subsections we prove
\Cref{def:sigma-properties} and \Cref{prop:smooth-is-correlating}.

\subsection{Proof of Lemma~\ref{def:sigma-properties}}
In the following let $\phi_v$ denote the density function of $\cN(0, v)$,
ie., $\phi_v(t)=\frac{1}{\sqrt{2v\pi}}\exp\left(-\frac{t^2}{2v}\right)$.
Note the relation to the standard Gaussian density $\phi=\phi_1$
where $\phi_v(t)=\frac{1}{\sqrt{v}}\phi(t/\sqrt{v})$.

We recall some useful facts about the derivatives of $\phi_v$. First, it
is well known that for $\phi$
it holds $\phi^{(k)}(t)=P_k(t)\phi(t)$ for some polynomial $P_k$ of
degree $k$. This formula extends to $\phi_v$ according to
\begin{align}\label{eq:13}
    \phi_v^{(k)}(t)&=\frac{1}{\sqrt{v}}\frac{\mathrm{d}^k}{\mathrm{d}t^k}
    \phi(t/\sqrt{v})
    =v^{-k/2-1/2}\phi^{(k)}(t/\sqrt{v})
    =v^{-k/2-1/2}P_k(t/\sqrt{v})\phi(t/\sqrt{v})\\
    &=v^{-k/2}P_k(t/\sqrt{v})\phi_v(t)\;.
\end{align}

i) Let us write $\phi_v(t) = (\phi_{v/2} \ast  \phi_{v/2}) (t)$
where
$\ast $ denotes the convolution in $\bR$, i.e. $(g \ast h) (y) = \int_\bR g(x) h(y-x) dx $.
Thus,
\begin{align}\label{eq:14}
    \Sigma_v = \sigma\ast\phi_v = (\sigma \ast  \phi_{v/2} ) \ast \phi_{v/2}.
\end{align}
Now, $\sigma \ast  \phi_{v/2} $ is in $L_1(\bR)$, since $\sigma$ is measurable
and polynomially bounded. Furthermore, $\phi_{v/2}$ is in $L_1(\bR)$ and $C^{\infty}(\bR)$. Therefore, by formulas for
derivatives of convolution, $\Sigma_v\in C^\infty(\bR)$.

ii) Let us start with the claim that $\Sigma_v^{(k)}$ is polynomially bounded
for every $v$ and $k$. For that, we recall some facts.
First, it is easy to establish by direct computation that
if $\sigma$ is polynomially bounded, then $\Sigma_v=\sigma\ast\phi_v$
is also polynomially bounded. Furthermore, if $P$ is any polynomial,
then also $\sigma\ast(P\phi_{v})$ is polynomially bounded
(this can be seen, eg., by observing that for every $P$ and
every $v'>v$ there exists $C$ such that
$|P\phi_v|\le C\phi_{v'}$).

Accordingly, using~\eqref{eq:13} and~\eqref{eq:14} we have that
\begin{align}
    \Sigma_v^{(k)}=(\sigma \ast  \phi_{v/2} ) \ast \phi_{v/2}^{(k)}
    =(\sigma\ast\phi_{v/2})\ast (P_{k,v}\phi_{v/2})
\end{align}
is polynomially bounded.

Let us move to the second claim with uniform bound.
For that let $k\ge 0$ and $1/2\le v\le 1$.
Let $v':=v-1/4$ and note that $1/4\le v'\le 3/4$.
Then, we have
the sequence of bounds on functions which hold pointwise:
\begin{align}
    |\Sigma_v^{(k)}|
    &=\left|(\sigma\ast\phi_{1/4})\ast\phi_{v'}^{(k)}\right|
    \le C_1\left(
    |\sigma\ast\phi_{1/4}|\ast |P_k(x/\sqrt{v'})|\phi_{v'}
    \right)\\
    &\le C_1\left(
    |\sigma\ast\phi_{1/4}|\ast (C_2+C_2(x/\sqrt{v})^{2\ell})
    \phi_{v'}
    \right)\\
    &\le C_3\left(
    |\sigma\ast\phi_{1/4}|\ast (C_4+C_4x^{2\ell})
    \phi
    \right)\;,
\end{align}
which is now bounded by a polynomial which does not depend on $v$.

iii) Recall,
\begin{align}
    \Sigma_v^{(k)} (0) = \int_{-\infty}^\infty (\phi_{v/2} \ast \sigma) (x) \cdot  \frac{\partial^k}{\partial t^k} \phi_{v/2}(x+t) \Big|_{t=0} dx,
\end{align}
where we denoted by $\phi_{v/2}^{(k)} $ the $k$-th derivative of $\phi_{v/2} $. Firstly, note that
\begin{align}
    \frac{\partial^k}{\partial t^k} \phi_{v/2}(x+t) \Big|_{t=0} = \frac{\partial^k}{\partial (x+t)^k} \phi_{v/2}(x+t)  \Big|_{t=0} = \phi_{v/2}^{(k)}(x) .
\end{align}
Let us give a formula for the k-th derivative of the Gaussian density:
\begin{align}
    \phi_v^{(k)} (x) = \phi_v(x) \cdot (-1)^k v^{-2k}\cdot \sum_{l=0}^k D_{l,k}  \left(\frac{x}{\sqrt{v}} \right)^{k-l},
\end{align}
where $D_{l,k}$ is a constant that does not depend on $v$, specifically
\begin{align}
D_{l,k}:= B_{(2k+l) \frac{1-(-1)^l}{2}} \cdot  2^{\frac{l}{2}}\cdot  \frac{ \Gamma(\frac{l+1}{2})}{\Gamma(\frac{1}{2})}\cdot  \cos \left(\frac{l \pi}{2} \right)
\end{align}
where $\Gamma(.)$ denotes the Gamma function and $B_n$ are the Bernoulli numbers. The exact values of the $D_{l,k}$ will not be relevant for this proof. Thus,
\begin{align}
    \Sigma_v^{(k)} (0): = \int_{-\infty}^\infty  (\phi_{v/2} \ast \sigma) (x) P_{v/2,k}(x) \phi_{v/2}(x) dx,
\end{align}
where we denoted $P_{v/2,k}(x) = (-1)^k v^{-2k}\cdot \sum_{l=0}^k D_{l,k}  \left(\frac{x}{\sqrt{v}} \right)^{k-l} $. On the other hand,
\begin{align}
    \Sigma_1^{(k)} (0): = \int_{-\infty}^\infty  (\phi_{v/2} \ast \sigma) (x) P_{1-v/2,k}(x) \phi_{1-v/2}(x) dx,
\end{align}
and
\begin{align}
    | \Sigma_v^{(k)}(0) - \Sigma_1^{(k)}(0) | = \Big|\int_{-\infty}^\infty  (\phi_{v/2} \ast \sigma) (x) \cdot  \left( P_{v/2,k}(x) \phi_{v/2}(x) -  P_{1-v/2,k}(x) \phi_{1-v/2}(x)  \right) \Big|.
\end{align}
We note that
\begin{align}
     P_{1-v/2,k}(x)=&\frac{(1-\frac{v}{2})^{-2k}}{(\frac{v}{2})^{-2k}} \left(\frac{v}{2}\right)^{-2k}  (-1)^k\\
     & \cdot \left( \sum_{l=0}^k D_{l,k} \left(\frac{x}{\sqrt{v/2}} \right)^{k-l} + D_{l,k} \left[ \left(\frac{x}{\sqrt{1-v/2}} \right)^{k-l} -  \left(\frac{x}{\sqrt{v/2}} \right)^{k-l}  \right]\right) \\
      = &\frac{(1-\frac{v}{2})^{-2k}}{(\frac{v}{2})^{-2k}} P_{v/2,k}(x)  \\
      & + \left(1- \frac{v}{2} \right)^{-2k} (-1)^k\sum_{l=0}^k D_{l,k} \left( \left(\frac{x}{\sqrt{1-v/2}} \right)^{k-l} -  \left(\frac{x}{\sqrt{v/2}} \right)^{k-l} \right).
\end{align}
Recalling $ \epsilon = 1-v$, and expanding for such $\epsilon$ we get
\begin{align}
  & \left(1+\frac{2 \epsilon}{1-\epsilon}\right)^{-2k} P_{v/2,k}(x) +  (1+\epsilon)^{-2k}\frac{(-1)^k}{2^{-2k}} \sum_{l=0}^k D_{l,k} x^{k-l} \frac{ (1-\epsilon)^{k-l}- (1+\epsilon)^{k-l}}{(1+ \epsilon )^{\frac{k-l}{2} }(1- \epsilon )^{\frac{k-l}{2} }}\\
    =& \left(1 -4k \frac{\epsilon}{1-\epsilon} + o(\epsilon) \right) P_{v/2,k}(x)\\
    &+ (1-2k\epsilon + o(\epsilon)) \frac{(-1)^k}{2^{-2k}} \sum_{l=0}^k D_{l,k} x^{k-l} \frac{-2 (k-l) \epsilon + (\epsilon) }{(1+\frac{k-l}{2}\epsilon  +o(\epsilon))(1-\frac{k-l}{2}\epsilon  +o(\epsilon))} \\
    =& \left(1- 4k \frac{\epsilon}{1-\epsilon}\right) P_{v/2,k}(x) + O(\epsilon) \cP_k(x),
\end{align}
where $\cP_k(x)$ is a polynomial in $x$ of degree $\leq k$. Moreover,
\begin{align}
    \phi_{1-v/2}(x) &= \frac{e^{- \frac{x^2}{v} }}{\sqrt{2 \pi v/2}} \cdot  \sqrt{\frac{v/2}{1-v/2}} \cdot  e^{-\frac{x^2}{2}\left( \frac{1}{1-\frac v2} - \frac{2}{v}\right) }\\
    & = \phi_{v/2} (x) \cdot \left(1 - \frac{2\epsilon}{1+\epsilon}  \right)^{1/2} \cdot e^{x^2 \frac{2\epsilon}{(1+\epsilon)(1-\epsilon)}}\\
    & = \phi_{v/2} (x) \cdot \left(1-\frac{\epsilon}{1+\epsilon} +o(\epsilon) \right) \cdot \left( 1+ x^2 \frac{2 \epsilon}{(1+\epsilon)(1-\epsilon)} + o(\epsilon) x^4  \right)\\
    & = \phi_{v/2} (x) \cdot \left( 1 +(x^2-1) O(\epsilon) \right).
\end{align}
Plugging these bounds in the previous expression, we get
\begin{align}
     | \Sigma_v^{(k)}(0) &- \Sigma_1^{(k)}(0) | \\
     & =\Big|\int_{-\infty}^\infty  (\phi_{v/2} \ast \sigma) (x) \cdot  \left( P_{v/2,k}(x) \phi_{v/2}(x) - P_{1-v/2,k}(x) \phi_{v/2} (x) \left( 1 +(x^2-1) O(\epsilon) \right) \right) \Big| \\
     & =\Big|\int_{-\infty}^\infty  (\phi_{v/2} \ast \sigma) (x)\phi_{v/2}(x)  \cdot  \left( P_{v/2,k}(x)- P_{1-v/2,k}(x)  \left( 1 +(x^2-1) O(\epsilon) \right) \right) \Big|  \\
     &= \Big|\int_{-\infty}^\infty  (\phi_{v/2} \ast \sigma) (x)  P_{v/2,k}(x) \phi_{v/2}(x) \cdot  \left(1-  (1- O(\epsilon) +(x^2-1) O(\epsilon)  ) + O(\epsilon) \cP_k(x) \right)  \Big|\\
     & =  O(\epsilon)\;.&\hfill\qed
\end{align}

\subsection{Proof of Lemma \ref{prop:smooth-is-correlating}}

Note that we only need to show that
$\INAL(M_k,\sigma)=\Omega(n^{-P})$ for the first
index $P$ such that $P\ge k$ and $\Sigma^{(P)}(0)\ne 0$.
By \Cref{def:expressive_ab}, we only need with two cases
$P=k$ and $P=k+1$. From now on, let us consider a fixed
pair of $k$ and $P$.

We denote by $x \in \{\pm 1\}^n$ the vector of all inputs, by $w \in \bR^n $ the vector of all weights and by $b \in \bR$ the bias. Additionally, we denote $\tau_i := \sign(w_i)$, and by $\tau \in \{\pm 1\}^n$ the vector of all weight signs. Recall that we consider $w_i,b \overset{iid}{\sim} \cN(0,\frac{1}{n})$ and that for $g,h: \{\pm 1\}^n \to \{ \pm 1\}$ and $\cU^n$ being the uniform distribution over the hypercube, we denote $ \langle g,h\rangle = \E_{x \sim \cU^n}[g(x) h(x)]$. We have
\begin{align}
\INAL(M_k,\sigma) &= \E_{w,b} \left[ \langle M_k,\sigma \rangle^2 \right]\\
    & = \E_{|w|,\tau,|b|, \sign(b)} \left[ \langle M_k,\sigma \rangle^2 \right]\\
    & \overset{(C.S.)}{\geq} \E_{\tau, \sign(b)} \left[
    \E_{|w|,|b|}\big[\langle M_k,\sigma \rangle\mid \tau, \sign(b)\big]^2
    \right], \label{eq:Cauchy-Schwartz}
\end{align}
where~\eqref{eq:Cauchy-Schwartz} follows by Cauchy-Schwartz inequality.
We will prove a lower bound on the inner expectation
$\left( \E_{|w|,|b|}  \langle M_k,\sigma \rangle \right)^2$ which is independent
of $\tau$ and $\sign(b)$. Accordingly, from now on consider $\tau$ and $\sign(b)$ to
be fixed at arbitrary values.

Let $T:=\{1,\ldots,k\}$ and denote by $x_T$ the coordinates of $x$ contained in $T$, and by $x_{\sim T} := x_{T^C} $ the coordinates of $x$ that are not contained in $T$ and hence do not appear in the monomial $M_T$. Similarly, we denote by $|w|_T, |w|_{\sim T}$ the coordinates of $|w|$ that appear (respectively do not appear) in set $T$. We proceed,
\begin{align}
    \E_{|w|,|b|}  \langle M_T,\sigma \rangle& = \E_{x,|w|,|b|} \left[  M_T(x) \cdot \sigma \left(\sum_{i\in [n]} w_i x_i +b\right) \right] \label{eq:inner_expectation}\\
    & = \E_{|w|_T,x_T,|b|}\left[ M_T(x) \cdot\E_{|w|_{\sim T},x_{\sim T}}    \sigma \left(\sum_{i\in [n]} w_i x_i +b\right) \right]
\end{align}
Observe that $ \sum_{i \not\in T} w_i x_i \sim \cN(0, \frac{n-k}{n})$, and denote $\Sigma_n(z) :=\Sigma_{1-\frac{k}{n}}(z)=\E_{Y \sim  \cN(0, \frac{n-k}{n})}[\sigma(z+Y)]$.
Moreover, let $G := \sum_{i \in T} w_i x_i+b$. Then,
\begin{align}\label{eq:06}
    \E_{|w|,|b|}  \langle M_T,\sigma \rangle& = \E_{|w|_T,|b|,x_T}\left[ M_T(x)
    \Sigma_n\left(G \right) \right].
\end{align}

Since, by condition i) in \Cref{def:sigma-properties},
function $\Sigma_n$ is $C^\infty$ and therefore $C^P$,
we apply Taylor's theorem with Lagrange remainder and write
\begin{align}
    \Sigma_n (z)= \sum_{\nu=0}^P a_{\nu,n} z^\nu + R_{P,n}(z),
\end{align}
where $a_{\nu,n}=\frac{\Sigma^{(\nu)}_n(0)}{\nu!}$ and
\begin{align}
R_{P,n}(z) = \frac{\Sigma_n^{(P+1)}(\xi_z)}{(P+1)!}z^{P+1} \qquad \text{ for some  } |\xi_z|\leq |z|.
\end{align}

Plugging this in~\eqref{eq:06}, we get
\begin{align}
    \E_{|w|,|b|}  \langle M_T,\sigma \rangle =
    \sum_{\nu =0}^P a_{\nu,n} \E_{|w|_T,|b|,x_T} \big[
    M_T(x)G^\nu \big]  + \E_{|w|_T,|b|,x_T} \big[ M_T(x) R_{P,n}\left(G\right)\big] \label{eq:taylor_two_terms}.
\end{align}
The following two propositions give the asymptotic characterization of the first and second term in~\eqref{eq:taylor_two_terms}.
\begin{proposition} \label{prop:dominant_term}
\begin{align}
 \sum_{\nu =0}^P a_{\nu,n} \E_{|w|_T,|b|,x_T} \big[M_T(x)G^\nu\big]=
 C(P)(-1)^{C'(\tau_T,\sign(b))}n^{-P/2}+O(n^{-P/2-1/2})\;.
\end{align}
where $C(P)\ne 0$ and $C'(\tau_T,\sign(b))\in\mathbb{Z}$ are constants
that do not depend on $n$.
\end{proposition}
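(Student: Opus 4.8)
The plan is to expand $G^\nu=\bigl(\sum_{i\in T}w_ix_i+b\bigr)^\nu$ by the multinomial theorem and to evaluate the expectation in~\eqref{eq:taylor_two_terms} one block of variables at a time: first integrate out $x_T$, then $|w|_T$ and $|b|$. A generic term of the expansion reads $\binom{\nu}{e_0,e_1,\dots,e_k}\,b^{e_0}\prod_{i\in T}(w_ix_i)^{e_i}$ with $e_0+\sum_{i\in T}e_i=\nu$. Since $x_i\in\{\pm1\}$ gives $x_i^2=1$, we have $\E_{x_T}\bigl[M_T(x)\prod_{i\in T}x_i^{e_i}\bigr]=1$ if every $e_i$ ($i\in T$) is odd and $0$ otherwise; in particular a surviving term needs $e_i\ge1$ for all $i\in T$, which forces $\nu\ge k$ (this is exactly Lemma~\ref{lem:moment-formula}). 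For $\nu=k$ the only surviving pattern is $e_0=0$, $e_i\equiv1$, and for $\nu=k+1$ the only surviving pattern is $e_0=1$, $e_i\equiv1$.

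Next I would compute the remaining half-Gaussian moments. Fixing $\tau_T$ and $\sign(b)$, we may write $w_i=\tau_i|w_i|$ with $|w_i|$ half-normal of scale $n^{-1/2}$, and likewise $b=\sign(b)\,|b|$; hence for odd $e_i$ one has $\E_{|w_i|}[w_i^{e_i}]=\tau_i\,c_{e_i}\,n^{-e_i/2}$ with $c_{e_i}>0$ depending only on $e_i$, and analogously for $b^{e_0}$. Therefore the surviving part of $\E_{|w|_T,|b|,x_T}[M_T(x)G^\nu]$ has order exactly $n^{-\nu/2}$, and since $a_{\nu,n}=\Sigma_n^{(\nu)}(0)/\nu!$ is bounded in $n$ by Lemma~\ref{def:sigma-properties}, the term $\nu=k$ dominates the finite sum in Proposition~\ref{prop:dominant_term}, contributing
\[
a_{k,n}\,k!\,\Bigl(\prod_{i\in T}\tau_i\Bigr)\Bigl(\tfrac{2}{\pi n}\Bigr)^{k/2}=\Sigma_n^{(k)}(0)\,\Bigl(\prod_{i\in T}\tau_i\Bigr)\Bigl(\tfrac{2}{\pi n}\Bigr)^{k/2},
\]
while every term with $\nu>k$ is $O(n^{-(k+1)/2})$.

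Then I would split into the two cases allowed by expressiveness (\Cref{def:expressive_ab}). If $P=k$, so $\Sigma^{(k)}(0)\ne0$, then Lemma~\ref{def:sigma-properties}(iii) with $\epsilon=k/n$ gives $\Sigma_n^{(k)}(0)=\Sigma^{(k)}(0)+O(1/n)$, so the displayed quantity is $\Sigma^{(k)}(0)(2/\pi)^{k/2}\bigl(\prod_{i\in T}\tau_i\bigr)n^{-k/2}+O(n^{-k/2-1})$, which has the claimed shape with $C(P)=\Sigma^{(P)}(0)(2/\pi)^{P/2}\ne0$ and $(-1)^{C'(\tau_T,\sign(b))}=\prod_{i\in T}\tau_i$. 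If $P=k+1$, so $\Sigma^{(k)}(0)=0$ but $\Sigma^{(k+1)}(0)\ne0$, then the $\nu=k$ term becomes $\Sigma_n^{(k)}(0)\cdot O(n^{-k/2})=O(n^{-k/2-1})=O(n^{-P/2-1/2})$ and is absorbed into the error, while the $\nu=k+1$ term with the pattern $e_0=1$, $e_i\equiv1$ evaluates to $\Sigma_n^{(k+1)}(0)\,\sign(b)\bigl(\prod_{i\in T}\tau_i\bigr)(2/(\pi n))^{(k+1)/2}=\Sigma^{(k+1)}(0)(2/\pi)^{(k+1)/2}\sign(b)\bigl(\prod_{i\in T}\tau_i\bigr)n^{-(k+1)/2}+O(n^{-(k+1)/2-1})$, again of the claimed shape, now with $(-1)^{C'}=\sign(b)\prod_{i\in T}\tau_i$.

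The main obstacle is combinatorial bookkeeping rather than analysis: one must check carefully that for $\nu=k$ and $\nu=k+1$ the surviving multinomial pattern is genuinely unique, track the products of multinomial coefficients $\binom{\nu}{1,\dots,1}$ against the half-normal moments, and verify that all error estimates are uniform in $\tau_T$ and $\sign(b)$ --- which they are, since the only dependence on those is through the $\pm1$ prefactors $\prod_{i\in T}\tau_i$ and $\sign(b)$. No genuine analytic difficulty appears at this stage because every expectation here is a finite polynomial moment; the one estimate that needs real work, namely controlling the Taylor remainder $R_{P,n}(G)$, is handled separately in Proposition~\ref{prop:error_term}.
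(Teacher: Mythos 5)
Your proposal is correct and follows essentially the same route as the paper: expand $G^\nu$ multinomially, use the parity of the $x_i$-moments to kill all but the terms with every $e_i$ odd (this is exactly the content of \Cref{lem:moment-formula}), evaluate the half-normal moments to get order $n^{-\nu/2}$, and invoke condition~iii) of \Cref{def:sigma-properties} to dispose of the $\nu=k$ term when $\Sigma^{(k)}(0)=0$ and $P=k+1$. The only difference is cosmetic: you additionally identify the unique surviving pattern for $\nu\in\{k,k+1\}$ and thereby pin down the explicit constant $C(P)=\Sigma^{(P)}(0)(2/\pi)^{P/2}$, whereas the paper only asserts $C(P)\ne 0$ as a sum of positive terms.
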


\begin{proposition} \label{prop:error_term}
\begin{align}
\E_{|w|_T,|b|,x_T} \big[M_T(x)R_{P,n}\left(G\right)\big]
= O(n^{-P/2-1/2}).
\end{align}
\end{proposition}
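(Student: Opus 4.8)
The plan is to bound the absolute value of the expectation $\E_{|w|_T,|b|,x_T}[M_T(x)R_{P,n}(G)]$ by $O(n^{-P/2-1/2})$, using the Lagrange form of the remainder together with the uniform polynomial bounds on the derivatives of $\Sigma_n$ furnished by \Cref{def:sigma-properties}. First I would observe that since $1/2 \le 1 - k/n \le 1$ for $n$ large enough, part ii) of \Cref{def:sigma-properties} gives a bound $|\Sigma_n^{(P+1)}(t)| \le C t^{2\ell} + C$ holding uniformly in $n$ (and in $t$), with constants $C,\ell$ independent of $n$. Since the Lagrange remainder satisfies $|R_{P,n}(G)| = \frac{|\Sigma_n^{(P+1)}(\xi_G)|}{(P+1)!}|G|^{P+1}$ with $|\xi_G| \le |G|$, we get the pointwise bound $|R_{P,n}(G)| \le C'(|G|^{2\ell} + 1)|G|^{P+1} \le C''(|G|^{P+1+2\ell} + |G|^{P+1})$ for new absolute constants.

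Next I would take absolute values and use $|M_T(x)| = 1$ to reduce to bounding $\E_{|w|_T,|b|,x_T}[|G|^{m}]$ for the two relevant exponents $m = P+1$ and $m = P+1+2\ell$. Recall $G = \sum_{i \in T} w_i x_i + b$ where the $k+1$ summands are (in absolute value) half-normals with variance $1/n$, and the $x_i \in \{\pm 1\}$; regardless of the signs $\tau_T, \sign(b)$ being conditioned on, $G$ is a sum of $k+1$ independent random variables each of which is $\pm$ a half-normal of scale $n^{-1/2}$. Hence $G = n^{-1/2} \widetilde G$ where $\widetilde G$ has all moments bounded by absolute constants depending only on $k$ (e.g. by the triangle inequality in $L^m$, $\E[|\widetilde G|^m]^{1/m} \le \sum_{i} \E[|w_i\sqrt n\, x_i|^m]^{1/m} + \E[|b\sqrt n|^m]^{1/m} \le (k+1)\,\E[|Z|^m]^{1/m}$ for $Z \sim \cN(0,1)$). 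Therefore $\E[|G|^m] = n^{-m/2}\E[|\widetilde G|^m] = O(n^{-m/2})$ with the implied constant independent of $n$.

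Combining these, $|\E[M_T(x)R_{P,n}(G)]| \le C''\big(O(n^{-(P+1+2\ell)/2}) + O(n^{-(P+1)/2})\big) = O(n^{-(P+1)/2}) = O(n^{-P/2-1/2})$, which is exactly the claim; crucially the bound does not depend on $\tau$ or $\sign(b)$, matching what is needed in~\eqref{eq:Cauchy-Schwartz}. The main obstacle — and the reason \Cref{def:sigma-properties} ii) is stated with the uniformity clause — is ensuring that the polynomial bound on $\Sigma_n^{(P+1)}$ does not degrade as $n \to \infty$ (i.e. as $v = 1-k/n \to 1$); without that uniformity the constant $C''$ could blow up with $n$ and the argument would fail. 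Everything else is a routine moment computation for sums of half-normals.
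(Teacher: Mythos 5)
Your proposal is correct, and it reaches the stated bound by a more direct route than the paper. The paper first splits the expectation according to whether $|G|\le D$ or $|G|>D$ for a fixed constant $D$ (Lemmas~\ref{lemma:error_first_term} and~\ref{lemma:error_second_term}): on the bounded region it replaces $\sup_{|y|\le D}|\Sigma_n^{(P+1)}(y)|$ by a constant $M_D$ (uniform in $n$ by property ii) of \Cref{def:sigma-properties}) and then computes the $(P+1)$-st moment of $G$, while on the unbounded region it uses Cauchy--Schwarz together with the exponentially small tail $\Pr[|G|>D]\le 4\exp(-D^2 n/8)$ and an $O(1)$ bound on $\E[R_{P,n}(G)^2]$. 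You instead apply the uniform polynomial bound $|\Sigma_n^{(P+1)}(t)|\le Ct^{2\ell}+C$ globally, absorb the resulting factor into higher moments of $|G|$, and conclude from $\E[|G|^m]=O(n^{-m/2})$ (via Minkowski's inequality) that both the $m=P+1$ and $m=P+1+2\ell$ terms are $O(n^{-(P+1)/2})$. This avoids the case split entirely and is arguably cleaner; the only thing the paper's decomposition buys is that the dominant contribution is isolated on a compact set where only a sup bound (rather than moment bookkeeping for the extra $|G|^{2\ell}$ factor) is needed. You also correctly identify the two points that matter: the uniformity in $n$ of the polynomial bound on $\Sigma_n^{(P+1)}$, and the fact that the final estimate is independent of $\tau$ and $\sign(b)$, as required for~\eqref{eq:Cauchy-Schwartz}.
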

Before proving Propositions~\ref{prop:dominant_term} and~\ref{prop:error_term}, let us see how
\Cref{prop:smooth-is-correlating} follows from them.
But this is clear: substituting into~\eqref{eq:taylor_two_terms}, we have
\begin{align}
    \left(\E_{|w|,|b|}  \langle M_T,\sigma \rangle \right)^2 =
    C(P)^2 n^{-P} + O(n^{-P-1/2})=\Omega(n^{-P})\;,
\end{align}
where the claimed bound does not depend on $\tau$ nor on $\sign(b)$.

\subsubsection{Proof of Proposition~\ref{prop:dominant_term}}
The main step for proving Proposition~\ref{prop:dominant_term} is the computation of $\langle M_T,G^\nu \rangle$, for $\nu \leq P$. This is summarized in the following formula.
\begin{lemma}\label{lem:moment-formula}
We have:
\begin{align}\label{eq:09}
\E_{|w|_T,|b|,x_T} M_T(x)G^{\nu}=\begin{cases}
0 &\text{if } \nu <k \vspace{0.5em}\\
C(\nu)(-1)^{C'(\tau_T,\sign(b))}n^{-\nu/2}
&\text{if }  \nu \ge k\;,
\end{cases}
\end{align}
where $C(\nu)>0$.
\end{lemma}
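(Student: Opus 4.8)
}
The plan is to expand $G^{\nu}$ by the multinomial theorem and then integrate out the randomness in the order $x_T$, then $(|w|_T,|b|)$, exploiting that $x_T$ is independent of $(|w|_T,|b|,\tau,\sign(b))$ and the parity constraint that multiplying by $M_T$ forces. Write $G=\sum_{i=1}^{k}w_ix_i+b$, so that
\begin{align*}
M_T(x)\,G^{\nu}=\sum_{\alpha_1+\dots+\alpha_{k+1}=\nu}\binom{\nu}{\alpha_1,\dots,\alpha_{k+1}}\Big(\prod_{i=1}^{k}w_i^{\alpha_i}x_i^{\alpha_i+1}\Big)b^{\alpha_{k+1}}.
\end{align*}
Taking $\E_{x_T}$ first and using that the $x_i$ are independent $\Rad(1/2)$, a term survives iff $\alpha_i+1$ is even for every $i\le k$, i.e. iff $\alpha_1,\dots,\alpha_k$ are all odd (hence each $\ge 1$); in particular $\sum_{i\le k}\alpha_i\ge k$, so every surviving term needs $\nu\ge k$. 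This already yields the first case of the lemma.

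For $\nu\ge k$ I would then take $\E_{|w|_T,|b|}$ with $\tau_T$ and $\sign(b)$ held fixed. Writing $w_i=\tau_i|w_i|$ and $b=\sign(b)|b|$, and using that $|w_i|$ and $|b|$ each have the law of $|Z|/\sqrt n$ with $Z\sim\cN(0,1)$, each surviving monomial contributes
\begin{align*}
\binom{\nu}{\alpha_1,\dots,\alpha_{k+1}}\Big(\prod_{i=1}^{k}\tau_i\Big)(\sign b)^{\alpha_{k+1}}\,n^{-\nu/2}\prod_{i=1}^{k+1}\mu_{\alpha_i},\qquad \mu_m:=\E|Z|^m>0,
\end{align*}
where I used $\tau_i^{\alpha_i}=\tau_i$ (valid since $\alpha_i$ is odd) and $\sum_{i\le k}\alpha_i+\alpha_{k+1}=\nu$, so that the power of $n$ is the \emph{same} $n^{-\nu/2}$ for every admissible term — this uniformity is exactly what later lets the $\nu=P$ term dominate in \Cref{prop:dominant_term}. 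The observation that collapses the sign to a single $\pm1$ is that $\sum_{i\le k}\alpha_i\equiv k\pmod 2$, so $\alpha_{k+1}\equiv\nu-k\pmod 2$ is constant over all admissible tuples; hence $(\sign b)^{\alpha_{k+1}}=(\sign b)^{\nu-k}$ factors out of the sum.

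Collecting, $\E_{|w|_T,|b|,x_T}M_T(x)G^{\nu}=n^{-\nu/2}\big(\prod_{i=1}^{k}\tau_i\big)(\sign b)^{\nu-k}\,C(\nu)$, where $C(\nu)$ is the sum over admissible $(\alpha_i)$ of the strictly positive quantities $\binom{\nu}{\alpha_1,\dots,\alpha_{k+1}}\prod_i\mu_{\alpha_i}$; rewriting the prefactor $\big(\prod_{i\le k}\tau_i\big)(\sign b)^{\nu-k}$ as $(-1)^{C'(\tau_T,\sign b)}$ with $C'=\#\{i\le k:\tau_i=-1\}+(\nu-k)\mathds{1}[\sign b=-1]\in\mathbb{Z}$ gives the stated form. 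It then remains to check $C(\nu)>0$ for $\nu\ge k$, which follows by exhibiting one admissible tuple: take $\alpha_1=\dots=\alpha_{k-1}=1$ and $(\alpha_k,\alpha_{k+1})=(\nu-k+1,0)$ if $\nu-k$ is even, $(\nu-k,1)$ if $\nu-k$ is odd. I do not anticipate a genuine obstacle here; the one point that needs care — and the reason the statement holds with a single sign rather than a mixture of two signs — is the parity argument pinning down $\alpha_{k+1}\bmod 2$, together with the bookkeeping that every admissible monomial carries exactly the factor $n^{-\nu/2}$.
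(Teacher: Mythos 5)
Your proposal is correct and follows essentially the same route as the paper's proof: expand $G^{\nu}$ (the paper uses ordered tuples $I\in[k+1]^{\nu}$ where you use the multinomial theorem, which is equivalent), kill terms with some $\alpha_i$ even via the Rademacher moments, deduce the vanishing for $\nu<k$ from $\sum_{i\le k}\alpha_i\ge k$, and for $\nu\ge k$ extract the uniform factor $n^{-\nu/2}$ and the common sign $(\prod_{i\le k}\tau_i)(\sign b)^{\nu-k}$ from the parity of $\alpha_{k+1}$, concluding positivity of $C(\nu)$ by exhibiting one admissible tuple. No gaps.
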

Let us first see how to finish the proof once \Cref{lem:moment-formula} is established.
Recall that $a_{\nu,n}=\frac{\Sigma^{(\nu)}_{1-k/n}(0)}{\nu!}$ and
let $a_\nu:=\frac{\Sigma^{(\nu)}(0)}{\nu!}$. We are considering a sum with $P+1$ terms,
so let $s_\nu:=a_{\nu,n}\E_{|w|_T,|b|,x_T}\big[M_T(x)G^\nu\big]$.
Accordingly, our objective is to show
that
\begin{align}
    \sum_{\nu=0}^P s_{\nu}=C(P)(-1)^{C'(\tau_T,\sign(b))}
    n^{-P/2}+O(n^{-P/2-1/2})\;.
\end{align}
We do that by considering the terms $s_\nu$ one by one.
For $\nu<k$, from \eqref{eq:09} we immediately have $s_\nu=0$.

For $k\le\nu<P$, by \Cref{def:expressive_ab}
recall that the only possible case is $P=k+1$
and $\Sigma^{(k)}(0)=0$.
Then, applying condition iii) from \Cref{def:sigma-properties},
\begin{align}
    |a_{\nu,n}|
    &=\left|\frac{\Sigma_{1-k/n}^{(\nu)}(0)-\Sigma^{(\nu)}(0)}{\nu!}\right|
    =O(n^{-1})\;,
\end{align}
which together with~\eqref{eq:09} gives $|s_v|=O(n^{-P/2-1/2})$.

Finally, for $\nu=P$, by assumption we have $a_P\ne 0$. Then,
by condition iii), we have $|a_{P,n}-a_P|=O(1/n)$ and~\eqref{eq:09}
gives us the correct form for $s_P$ and the whole expression.

All that is left is the proof of \Cref{lem:moment-formula}.

\begin{proof}[Proof of \Cref{lem:moment-formula}]
The proof proceeds by using the linearity of expectation and
independence and expanding the formula for $G^\nu$.
Recall that we assumed wlog that $T=\{1,\ldots,k\}$
and let $z_i:=w_ix_i$ for $i\le k$ and $z_{k+1}:=b$:
\begin{align}
    \E_{|w|_T,|b|,x_T}M_T(x)G^{\nu}
    &=\E_{|w|_T,|b|,x_T}\left(\prod_{i=1}^kx_i\right)
    \left(\sum_{i=1}^kw_ix_i+b\right)^\nu\\
    &=\sum_{I=(i_1,\ldots,i_{\nu})\in[k+1]^{\nu}}
    \E_{|w|_T,|b|,x_T}\left(\prod_{i=1}^kx_i\right)
    \left(\prod_{i\in I}z_i\right)\;.
    \label{eq:10}
\end{align}
Let us focus on a single term of the sum in~\eqref{eq:10}
for $I=(i_1,\ldots,i_{\nu})\in[k+1]^\nu$. For $j=1,\ldots,k+1$,
let $\alpha_j=\alpha_j(I):=|\{m:i_m=j\}|$. Accordingly, we
can rewrite a term from~\eqref{eq:10} as
\begin{align}
    \E_{|w|_T,|b|,x_T}&\left(\prod_{i=1}^kx_i\right)
    \left(\prod_{i\in I}z_i\right)\\
    &=\E_{|w|_T,|b|,x_T}
    \left(\prod_{i=1}^k w_i^{\alpha_i}x_i^{\alpha_i+1}\right)
    b^{\alpha_{k+1}}\\
    &=\left(\prod_{i=1}^k\tau_i^{\alpha_i}\right)\sign(b)^{\alpha_{k+1}}
    \left(\prod_{i=1}^k \E_{|w|_i}\big[|w|_i^{\alpha_i}\big]
    \cdot\E_{x_i}\big[x_i^{\alpha_i+1}\big]\right)
    \E_{|b|}\big[|b|^{\alpha_{k+1}}\big]\;.\label{eq:11}
\end{align}
Since $\E[x_i^{\alpha_i+1}]=0$ if $\alpha_i$ is even,
for a term in~\eqref{eq:11} to be non-zero it is necessary
that $\alpha_i$ is odd for every $1\le i\le k$. Consequently,
since $\sum_{i=1}^{k+1}\alpha_i=\nu$, in any non-zero term
the parity of $\alpha_{k+1}$ is equal to the parity of
$\nu-k$. Therefore, every non-zero term is of the form
\begin{align}
    \E_{|w|_T,|b|,x_T}\left(\prod_{i=1}^kx_i\right)
    \left(\prod_{i\in I}z_i\right)
    &=\left(\prod_{i=1}^k\tau_i\right)\sign(b)^{\mathds{1}[\nu-k\text{ odd}]}
    \cdot\left(\prod_{i=1}^k\E_{|w|_i}\big[|w_i|^{\alpha_i}\big]\right)
    \E_{|b|}\big[|b|^{\alpha_{k+1}}\big]\\
    &=(-1)^{C'(\tau_T,\sign(b))}\cdot\left(\prod_{i=1}^k\E_{|w|_i}\big[|w_i|^{\alpha_i}\big]\right)
    \E_{|b|}\big[|b|^{\alpha_{k+1}}\big]\;.\label{eq:12}
\end{align}
We now establish the first case from~\eqref{eq:09}. If
$\nu<k$, then since $\nu=\sum_{i=1}^{k+1}\alpha_i$ at least one
of $\alpha_i$, $1\le i\le k$ must be zero, and therefore even.
Consequently, each term in~\eqref{eq:10} is zero and it follows
that $\E_{|w|_T,|b|,x_T}M_T(x)G^\nu=0$.

On the other hand, for $\nu\ge k$, there exists a non-zero term,
for example taking $\alpha_1=\ldots=\alpha_k=1$ and
$\alpha_{k+1}=\nu-k$. Take any such term arising from
$I\in[k+1]^\nu$. Since
$w_i,b\sim\mathcal{N}(0,1/n)$, we have
$\E_{|w|_i}\big[|w_i|^j\big],E_{|b|}\big[|b|^j\big]=C_j\cdot n^{-j/2}$
for some $C_j>0$ for every fixed $j$. Substituting in~\eqref{eq:12}
and using $\nu=\sum_{i=1}^{k+1}\alpha_i$, we get
\begin{align}
\E_{|w|_T,|b|,x_T}\left(\prod_{i=1}^kx_i\right)
\left(\prod_{i\in I}z_i\right)=
(-1)^{C'(\tau_T,\sign(b))}
C_I n^{-\nu/2}
\end{align}
for some $C_I>0$.
Therefore, $C(\nu)(-1)^{C'(\tau_T,\sign(b))}n^{-\nu/2}$
with $C(\nu)>0$ follows
since it is a sum of at most $(k+1)^\nu$ positive terms.
\end{proof}


\subsubsection{Proof of Proposition~\ref{prop:error_term}}
Let $D$ be a positive constant. We apply the decomposition
\begin{align}
  \Big|  \E_{|w|_T,|b|,x_T}
  \left[M_T(x)R_{P,n}(G) \right] \Big|
  & \leq
  \E_{|w|_T,|b|,x_T} \Big[
  \big|R_{P,n}(G)\big|\cdot \mathds{1}(|G| \leq D ) \Big]\\
  &\qquad\qquad+
  \E_{|w|_T,|b|,x_T} \Big[
  \big|R_{P,n}(G)\big|\cdot\mathds{1}(|G|>D )
  \Big]
\end{align}
The proposition follows from Lemmas~\ref{lemma:error_first_term} and~\ref{lemma:error_second_term} applied to an
arbitrary value of $D$, eg., $D=1$.

\begin{lemma}\label{lemma:error_first_term}
For any $D>0$,
\begin{align}
     \E_{|w|_T,|b|,x_T}
     \Big[ \big|R_{P,n}(G)\big|\cdot\mathds{1}(|G| \leq D )  \Big] =O\left(n^{-\frac{P+1}{2}}\right).
\end{align}
\end{lemma}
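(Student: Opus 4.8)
The plan is to bound the integrand pointwise on the event $\{|G|\le D\}$ and then integrate. First I would invoke the Lagrange form of the remainder: $R_{P,n}(G)=\frac{\Sigma_n^{(P+1)}(\xi_G)}{(P+1)!}\,G^{P+1}$ for some $\xi_G$ with $|\xi_G|\le|G|$, so on $\{|G|\le D\}$ we have $|\xi_G|\le D$. Since $\Sigma_n=\Sigma_{1-k/n}$ and $1-k/n\in[1/2,1]$ as soon as $n\ge 2k$, the uniform polynomial bound of \Cref{def:sigma-properties}(ii) shows that $\Sigma_n^{(P+1)}$ is bounded on the compact interval $[-D,D]$ by a constant $C_D$ that does not depend on $n$. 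Hence, for all $n\ge 2k$,
\[
  \big|R_{P,n}(G)\big|\cdot\mathds{1}(|G|\le D)\ \le\ \frac{C_D}{(P+1)!}\,|G|^{P+1}.
\]

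Next I would estimate $\E_{|w|_T,|b|,x_T}\big[|G|^{P+1}\big]$. Recall $G=\sum_{i\in T}w_ix_i+b$, and that under the conditioning on $\tau$ and $\sign(b)$ the quantities $|w_i|$ (for $i\in T$) and $|b|$ are independent half-normals of variance $1/n$ while $x_T$ is Rademacher; writing $|w_i|=|Z_i|/\sqrt n$ and $|b|=|Z_{k+1}|/\sqrt n$ with $Z_1,\dots,Z_{k+1}$ iid standard Gaussians, the triangle inequality gives $|G|\le \frac1{\sqrt n}\sum_{i=1}^{k+1}|Z_i|$. Therefore
\[
  \E\big[|G|^{P+1}\big]\ \le\ n^{-\frac{P+1}{2}}\,\E\Big[\Big(\sum_{i=1}^{k+1}|Z_i|\Big)^{P+1}\Big]\ =\ O\big(n^{-\frac{P+1}{2}}\big),
\]
since the last expectation is a finite constant depending only on $k$ and $P$ (expand the power and use finiteness of Gaussian absolute moments). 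Combining the two displays gives $\E[|R_{P,n}(G)|\,\mathds{1}(|G|\le D)]=O(n^{-(P+1)/2})$, which is the claim; note that the bound obtained is uniform in $\tau$ and $\sign(b)$, as will be needed.

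I do not expect a real obstacle here. The only point that needs care is that the constant $C_D$ must not secretly depend on $n$ through the smoothing parameter $1-k/n$; this is exactly what the uniformity clause in \Cref{def:sigma-properties}(ii) provides, and it applies once $n\ge 2k$. Everything else is a routine moment computation.
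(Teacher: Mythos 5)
Your proof is correct and follows essentially the same route as the paper's: the Lagrange form of the remainder, the uniform ($n$-independent) polynomial bound on $\Sigma_n^{(P+1)}$ over $[-D,D]$ supplied by \Cref{def:sigma-properties}(ii) once $n\ge 2k$, and a moment estimate of order $n^{-(P+1)/2}$. The only cosmetic difference is that you bound $\E\big[|G|^{P+1}\big]$ via the triangle inequality and finiteness of Gaussian absolute moments, whereas the paper notes that $G\sim\mathcal{N}(b,k/n)$ given $b$ and invokes the explicit central-moment formula; both give the same constant times $n^{-(P+1)/2}$, uniformly in $\tau$ and $\sign(b)$.
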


\begin{proof}

Let us observe that for a fixed $b$, $G \sim \cN(b,\frac{k}{n})$, thus
\begin{align}
    \E_{|w|_T,x_T} \left[ |R_{P,n}(G)| \mathds{1}(|G| \leq D )  \right]  = \E_{y \sim \cN(b,\frac{k}{n})} \left[ |R_{P,n}(y)| \mathds{1}(|y| \leq D )  \right].
\end{align}
Recall that $R_{P,n}(x) = \frac{\Sigma_n^{(P+1)}(\xi_x)}{(P+1)!}x^{P+1}  $ for some $|\xi_x| \leq |x|$. Thus,
\begin{align}
    \E_{y \sim \cN(b,\frac{k}{n})} \left[ |R_{P,n}(y)| \mathds{1}(|y| \leq D )  \right] \leq \sup_{|y|\leq D}  \frac{|\Sigma_n^{(P+1)}(y)|}{(P+1)!} \cdot \E_{y \sim \cN(b,\frac{k}{n}) } |y|^{P+1}.
\end{align}
On the one hand,
assuming that $n\ge 2k$,
we have $\Sigma_n=\Sigma_v$ for some $1/2\le v\le 1$,
and thus using the common polynomial bound
in property ii)
$\sup_{|y|\leq D}  |\Sigma_n^{(P+1)}(y)|\leq M_D $,
where the constant $M_D$ does not depend on $n$.
On the other hand,
\begin{align}
    \E_{y \sim \cN(b,\frac{k}{n}) } |y|^{P+1}
   & = n^{-\frac{P+1}{2}} \cdot \E_{y} \big| \sqrt{n} \cdot y|^{P+1}\\
     & \leq n^{-\frac{P+1}{2}} \cdot  2^{P+1} \cdot  \left(  |\sqrt{n}b |^{P+1} + \E_{z \sim \cN(0,k)}|z  |^{P+1} \right)\\
     & = n^{-\frac{P+1}{2}} \cdot  2^{P+1} \cdot  \left(  |\sqrt{n}b |^{P+1} +\frac{ (2k)^{\frac{P+1}{2}}\Gamma(\frac{P+2}{2})}{\sqrt{\pi}} \right),
\end{align}
where in the last equation we plugged the (P+1)-th central moment of the Gaussian distribution (see, eg.,~\cite{Winkelbauer2012MomentsAA}). Since $|\sqrt{n}b|$ is also distributed like
an absolute value of $\mathcal{N}(0,1)$, taking the expectation over $|b|$, we get that for fixed $P,k$,
\begin{align}
   \E_{|w|_T,|b|,x_T} \Big[ \big|R_{P,n}(G)\big|\cdot\mathds{1}(|G| \leq D )  \Big] = O\left(n^{-\frac{P+1}{2}}\right).
\end{align}
\end{proof}

\begin{lemma} \label{lemma:error_second_term}
For any constant $D>0$, there exist $C_1,C_2>0$ such that
\begin{align}
     \E_{|w|_T,|b|,x_T}
     \Big[ \big|R_{P,n}(G)\big|\cdot\mathds{1}(|G| > D )  \Big] \leq C_1 \exp(-C_2 n).
\end{align}
\end{lemma}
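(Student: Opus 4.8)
\textbf{Proof proposal for Lemma~\ref{lemma:error_second_term}.}
The plan is to combine the uniform polynomial bound on the derivatives of $\Sigma_n$ with the observation that on the event $\{|G|>D\}$ a Gaussian of variance $\Theta(1/n)$ has exceeded a fixed constant, which is an event of probability $\exp(-\Omega(n))$. First I would dispose of small $n$: there are only finitely many $n<2k$, and for each the left-hand side is a fixed finite quantity, so it can be absorbed into $C_1$. For $n\ge 2k$ we have $\Sigma_n=\Sigma_v$ with $v=1-k/n\in[1/2,1]$, so the uniform clause in property ii) of \Cref{def:sigma-properties} applies: there are $C,\ell$, independent of $n$, with $|\Sigma_n^{(P+1)}(t)|\le C|t|^{\ell}+C$ for all $t$. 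Since $R_{P,n}(G)=\frac{\Sigma_n^{(P+1)}(\xi_G)}{(P+1)!}G^{P+1}$ with $|\xi_G|\le|G|$ and $C|t|^\ell+C$ is nondecreasing in $|t|$, this yields the pointwise bound $|R_{P,n}(G)|\le C'\bigl(|G|^{P+1}+|G|^{P+1+\ell}\bigr)$ for a constant $C'$ not depending on $n$.

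Next I would estimate the tail. Writing $G=\sum_{i\in T}w_ix_i+b$ with $|x_i|=1$, we have $|G|\le\sum_{i=1}^k|w_i|+|b|$, a sum of $k+1$ independent variables each distributed as $|\mathcal N(0,1/n)|$; hence on $\{|G|>D\}$ at least one of them exceeds $D/(k+1)$, and a union bound with the standard Gaussian tail estimate gives $\Pr(|G|>D)\le 2(k+1)\exp\!\bigl(-\tfrac{D^2 n}{2(k+1)^2}\bigr)$. Then Cauchy--Schwarz gives
\begin{align}
\E_{|w|_T,|b|,x_T}\bigl[\,|R_{P,n}(G)|\,\mathds 1(|G|>D)\bigr]\le\sqrt{\E\bigl[R_{P,n}(G)^2\bigr]}\cdot\sqrt{\Pr(|G|>D)}\;.
\end{align}
The first factor is bounded uniformly in $n$: conditionally on the fixed signs, $G\sim\mathcal N(b,k/n)$ with $\E[|b|^{2m}]=O(1)$, so $\E[|G|^{2m}]=O(1)$ for every fixed $m$, whence $\E[R_{P,n}(G)^2]=O(1)$. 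The second factor is $\exp(-\Omega(n))$. Multiplying, the whole expression is at most $C_1\exp(-C_2 n)$ for suitable constants, e.g.\ any $C_2<\tfrac{D^2}{4(k+1)^2}$ with $C_1$ chosen large enough.

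The only mild obstacle is bookkeeping: one must be careful that the constants $C,\ell$ (and hence $C'$) in the polynomial bound for $\Sigma_n^{(P+1)}$, and the bound on $\E[|G|^{2m}]$, are genuinely independent of $n$. The first is exactly the uniformity statement in property ii) of \Cref{def:sigma-properties} (valid since $1-k/n\in[1/2,1]$ once $n\ge 2k$), and the second holds because the variance of $G$ is $(k+1)/n\le 1$ for $n\ge k+1$. No further ingredients are needed.
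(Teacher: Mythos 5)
Your proposal is correct and follows essentially the same route as the paper: a Cauchy--Schwarz split into $\E[R_{P,n}(G)^2]^{1/2}$ (bounded uniformly in $n$ via the uniform polynomial bound in property ii) of \Cref{def:sigma-properties}) times $\Pr[|G|>D]^{1/2}$ (exponentially small by a Gaussian tail bound). The only cosmetic differences are that you handle the tail by a union bound over the $k+1$ summands of $G$ where the paper splits $G=G'+|b|$ into two pieces, and you explicitly dispose of the finitely many $n<2k$; both are fine.
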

\begin{proof}
By Cauchy-Schwartz inequality,
\begin{align}
\E_{|w|_T,|b|,x_T} \left[ |R_{P,n}(G)| \mathds{1}(|G| > D ) \right]
&\overset{(C.S.)}{\leq} \E_{|w|_T,|b|,x_T} [R_{P,n}(G)^2]^{1/2} \cdot
\Prr_{|w|_T,|b|,x_T}[|G|>D]^{1/2}\;.
\end{align}
For the first term, we use the universal polynomial bound from property ii):
\begin{align}\label{eq:07}
\Big|\E_{|w|_T,|b|,x_T} [R_{P,n}(G)^2]\Big|
&=\E_{|w|_T,|b|,x_T}\left[\left(
\frac{\sup_{|y|\le|G|}\Sigma_n^{(P+1)}(y)}{(P+1)!}|G|^{P+1}
\right)^2\right]\\
& \le\E_{|w|_T,|b|,x_T}\left[\left(
\frac{\sup_{|y|\le|G|}Cy^{2\ell}+C}{(P+1)!}|G|^{P+1}
\right)^2\right]\\
&=\E_{|w|_T,|b|,x_T}\left[\left(
\frac{CG^{2\ell}+C}{(P+1)!}|G|^{P+1}
\right)^2\right]
=O_n(1)\;,
\end{align}
using a similar reasoning as in \Cref{lemma:error_first_term}.


On the other hand, writing $G=G'+|b|$, we have
\begin{align}
    \Prr_{|w|_T,|b|,x_T}[|G|>D]
    &\le\Prr_{|b|}[|b|>D/2]+\Prr_{|w|_T,x_T}[|G'|>D/2]\\
    &\le 2\Prr_{y\sim\mathcal{N}(0,1/n)}[|y|>D/2]
    \le 4\exp(-D^2n/8)\;.\label{eq:08}
\end{align}
We get desired bound putting together~\eqref{eq:07} and~\eqref{eq:08}.
\end{proof}

\section{Expressivity of Common Activation Functions} \label{app:common_act}

In this section we show that ReLU and sign are expressive.
It is clear that both of these functions are polynomially
bounded, so we only need to analyze their Hermite
expansions for condition b) in \Cref{def:expressive_ab}.
In both cases we do it by writing a closed form for $\Sigma^{(k)}(0)$.

\begin{proposition}
$\ReLU(x) := \max \{0,x\}$ is expressive.
\end{proposition}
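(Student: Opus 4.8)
The plan is to compute the Gaussian smoothing $\Sigma=\Sigma_1$ of $\ReLU$ in closed form and then read off the vanishing pattern of its derivatives at the origin. Condition a) of Definition~\ref{def:expressive_ab} is immediate, since $|\ReLU(x)|\le|x|$. For condition b), the first step is to record the classical identity
\begin{align}
    \Sigma(t) = \E_{Y\sim\cN(0,1)}\big[\max(0,Y+t)\big]
    = \int_{-t}^{\infty}(y+t)\,\phi(y)\,dy
    = \phi(t) + t\,\Phi(t),
\end{align}
where $\phi=\phi_1$ is the standard Gaussian density and $\Phi$ its cumulative distribution function; this uses $\int_{-t}^{\infty} y\,\phi(y)\,dy=\phi(t)$ and $\int_{-t}^{\infty}\phi(y)\,dy=\Phi(t)$.

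Next I would differentiate. Using $\phi'(t)=-t\phi(t)$,
\begin{align}
    \Sigma'(t) = -t\phi(t)+\Phi(t)+t\phi(t) = \Phi(t),
    \qquad
    \Sigma''(t) = \phi(t),
\end{align}
and more generally $\Sigma^{(m)}(t)=\phi^{(m-2)}(t)$ for every $m\ge 2$. Evaluating at $0$ gives $\Sigma(0)=\phi(0)=1/\sqrt{2\pi}\ne 0$ and $\Sigma'(0)=\Phi(0)=1/2\ne 0$. For $m\ge 2$, recall $\phi^{(k)}(t)=(-1)^k H_k(t)\phi(t)$ with $H_k$ the probabilist's Hermite polynomial, so $\phi^{(k)}(0)=0$ precisely when $k$ is odd (for even $k$ one has $H_k(0)=(-1)^{k/2}(k-1)!!\ne 0$). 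Hence $\Sigma^{(m)}(0)=\phi^{(m-2)}(0)$ vanishes exactly for odd $m\ge 3$, and in that case $m+1$ is even so $\Sigma^{(m+1)}(0)=\phi^{(m-1)}(0)\ne 0$. Combining the cases $m=0$, $m=1$, and $m\ge 2$, for every $m\in\bN$ at least one of $\Sigma^{(m)}(0)$, $\Sigma^{(m+1)}(0)$ is nonzero, so condition b) holds and $\ReLU$ is expressive.

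There is no real obstacle here; the only care needed is in the closed form for $\Sigma$ and in the elementary fact that $H_k(0)=0$ exactly for odd $k$. Equivalently, via item i) of the remark following Definition~\ref{def:expressive_ab}, one may phrase the argument through the Hermite expansion $\ReLU=\sum_{m\ge 0} d_m H_m$, whose coefficients $d_m=\Sigma^{(m)}(0)/m!$ equal $1/\sqrt{2\pi}$ for $m=0$, $1/2$ for $m=1$, $0$ for odd $m\ge 3$, and are nonzero for even $m\ge 2$, so the expansion has no two consecutive zero coefficients.
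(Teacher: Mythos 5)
Your proof is correct and follows essentially the same route as the paper: both derive the closed form $\Sigma(t)=\phi(t)+t\Phi(t)$ and then read off that $\Sigma^{(m)}(0)$ is nonzero for $m=0,1$ and all even $m$, which rules out two consecutive zeros. Your shortcut $\Sigma^{(m)}=\phi^{(m-2)}$ for $m\ge 2$ together with the parity of Hermite polynomial zeros is a slightly cleaner way to obtain the same vanishing pattern that the paper extracts from the Taylor expansions of $\erf$ and $\phi$.
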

\begin{proof}
We will see that in the case $\sigma=\ReLU$ we have
$\Sigma(z)=\frac{z}{2}+\frac{z}{2}\erf(z)+\frac{1}{2\sqrt{\pi}}\exp(-z^2)$.
Indeed,
\begin{align}
    \Sigma(z)
    &=\int_{-\infty}^\infty \mathds{1}(z+y\ge 0)(z+y)\phi(y)\,\mathrm{d}y
    =\int_{-z}^\infty (z+y)\phi(y)\,\mathrm{d}y
    =z\Phi(z)+\phi(z)\\
    & =\frac{z}{2}+\frac{z\erf(z/\sqrt{2})}{2}+\phi(z)\;.
\end{align}
Using well-known Taylor expansions of $\erf$ and $\phi$, this results in
\begin{align}
    \Sigma^{(k)}(0)=\begin{cases}
    \frac{1}{2}&\text{if }k=1\;,\\
    \frac{(-1)^{k/2+1}}{\sqrt{2\pi}2^{k/2}(k-1)(k/2)!}
    &\text{if $k$ is even},\\
    0&\text{otherwise.}
    \end{cases}
\end{align}
In particular, $\Sigma^{(k)}(0)\ne 0$ for every even $k$ and
ReLU is expressive.
\end{proof}

\begin{proposition}
The sign function $\sign(x)$ is expressive.
\end{proposition}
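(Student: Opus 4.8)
The plan is to compute a closed form for the Gaussian smoothing $\Sigma(z) = \E_{Y\sim\cN(0,1)}[\sign(z+Y)]$ and then read off its Taylor coefficients at $0$, exactly as was done for $\ReLU$. First I would observe that
\begin{align}
\Sigma(z) = \Prr_{Y\sim\cN(0,1)}[Y > -z] - \Prr_{Y\sim\cN(0,1)}[Y < -z] = 2\Phi(z) - 1 = \erf(z/\sqrt 2),
\end{align}
where $\Phi$ is the standard normal CDF. Polynomial boundedness of $\sign$ is immediate since $|\sign(x)|\le 1$, so the only thing to verify is condition b) of \Cref{def:expressive_ab}.

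Next I would expand $\erf$ in its well-known Taylor series. Since $\erf(t) = \frac{2}{\sqrt\pi}\sum_{m=0}^{\infty} \frac{(-1)^m t^{2m+1}}{m!(2m+1)}$, substituting $t = z/\sqrt 2$ gives
\begin{align}
\Sigma(z) = \frac{2}{\sqrt\pi}\sum_{m=0}^\infty \frac{(-1)^m}{m!(2m+1)2^{m+1/2}} z^{2m+1}.
\end{align}
Matching with $\Sigma(z) = \sum_k \frac{\Sigma^{(k)}(0)}{k!} z^k$, one finds $\Sigma^{(k)}(0) = 0$ for every even $k$, and for odd $k = 2m+1$,
\begin{align}
\Sigma^{(k)}(0) = \frac{2\,(2m+1)!}{\sqrt\pi\, m!\,(2m+1)\,2^{m+1/2}} = \frac{(2m)!}{\sqrt\pi\, m!\, 2^{m-1/2}} \neq 0.
\end{align}
So $\Sigma^{(k)}(0)\ne 0$ for every odd $k$.

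Finally I would conclude: since for each $m\in\bN$ at least one of $m, m+1$ is odd, at least one of $\Sigma^{(m)}(0), \Sigma^{(m+1)}(0)$ is nonzero, which is precisely condition b). Hence $\sign$ is expressive. There is no real obstacle here — the only mild subtlety is justifying that one may differentiate $\Sigma = \sign \ast \phi$ termwise / that $\Sigma = \erf(\cdot/\sqrt 2)$ genuinely holds as a smooth function (which follows from \Cref{def:sigma-properties}(i) applied to $\sigma = \sign$, or directly from differentiating under the integral sign since $\phi$ and its derivatives decay rapidly). The rest is a routine power-series identification.
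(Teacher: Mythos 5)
Your proof is correct and follows essentially the same route as the paper: compute $\Sigma(z)=2\Phi(z)-1=\erf(z/\sqrt{2})$, expand $\erf$ in its Taylor series, and observe that $\Sigma^{(k)}(0)\neq 0$ for every odd $k$ (so condition b) holds since one of any two consecutive integers is odd). The only nitpick is that you dropped the $(-1)^m$ sign in the final expression for $\Sigma^{(2m+1)}(0)$, which is immaterial for nonvanishing.
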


\begin{proof}
In this case, similarly, we have
\begin{align}
    \Sigma(z)&=-\int_{-\infty}^{-z}\phi(z)\,\mathrm{d}z
    +\int_{-z}^\infty\phi(z)\,\mathrm{d}z
    =2\Phi(z)-1=\erf(z/\sqrt{2})\;,
\end{align}
which can be seen to have the expansion
\begin{align}
    \Sigma^{(k)}(0)=\begin{cases}
    \frac{2}{\sqrt{\pi}}\cdot
    \frac{(-1)^{(k-1)/2}}
    {2^{k/2}\left(\frac{k-1}{2}\right)!k}
    &\text{if $k$ is odd,}\\
    0&\text{otherwise.}
    \end{cases}
\end{align}
Again, the sign function is expressive
since $\Sigma^{(k)}\ne 0$ for every odd $k$.
\end{proof}

\section{Proof of Proposition~\ref{prop:general_functions}}
Using the definition of INAL and the Fourier expansion of $f$, we get
\begin{align}
    \INAL(f,\sigma) & =   \E_{w,b} \left[ \langle   f,\sigma \rangle^2 \right]\\
    & = \E_{w,b} \left[ \left(\sum_{T\in [n]} \hat f(T) \langle M_T, \sigma \rangle  \right)^2 \right]\\
    & = \E_{w,b} \left[ \sum_{T} \hat f(T)^2 \langle M_T, \sigma \rangle^2  + \sum_{S\neq T} \hat f(S) \hat f(T)\langle M_T, \sigma \rangle \langle M_S, \sigma \rangle \right]. \label{eq:cross_term}
\end{align}
We show that the second term of~\eqref{eq:cross_term} is zero. Let $S,T$ be two distinct sets. Without loss of generality, assume that $|S|\geq |T|$, and let $i $ be such that $i \in S$ but $i \not\in T$ (such $i$ must exist since $S \neq T$).
Fix $w$ and $b$ and decompose $w$ into $w_{\sim i},|w_i|,\sign(w_i)$
where $w_{\sim i} $ denotes the vector of weights, excluding coordinate $i$.
By applying the change of variable $\sign(w_i) x_i \mapsto y_i$ and noticing that $x_i$ has the same distribution of $y_i$, we then get
\begin{align}
    \langle M_S, \sigma \rangle& = \E_{x}[M_S(x) \cdot \sigma(x_i \sign(w_i) |w_i| + \sum_{j \neq i} x_j w_j  +b) ] \\
    & =  \sign(w_i) \cdot \E_{x_{\sim i},y_i}[M_{S_{\sim i}}(x)\cdot y_i \cdot  \sigma(y_i|w_i| + \sum_{j\neq i} x_j w_j +b) ]\\
    & := \sign(w_i) \cdot E_S,
\end{align}
where $E_S$ does not depend on $\sign(w_i)$.
On the other hand,
\begin{align}
    \langle M_T, \sigma \rangle & = \E_{x}[M_T(x) \cdot  \sigma(x_i \sign(w_i) |w_i| + \sum_{j \neq i} x_j w_j  +b) ] \\
    & = \E_{x_{\sim i},y_i}[M_{T}(x)\cdot  \sigma(y_i|w_i| + \sum_{j\neq i} x_j w_j +b) ],
\end{align}
which means that $\langle M_T, \sigma \rangle$ does not depend on $\sign(w_i)$.
Thus, we get
\begin{align}
    \E_{w,b}\left[\langle M_T,\sigma\rangle\langle M_S,\sigma\rangle\right]
    =\E_{w,b}\left[ \sign(w_i) \cdot \langle M_T, \sigma \rangle \cdot E_S \right] =0.
\end{align}
Hence,
\begin{align}
    \INAL(f,\sigma) & = \sum_T \hat f(T)^2 \E_{w,b} \left[ \langle M_T, \sigma \rangle^2\right]\\
    & = \sum_T \hat f(T)^2\INAL(M_T,\sigma).
\end{align}

\section{Proof of Corollary~\ref{cor:INAL_small_high_degree}}
Indeed, by \Cref{prop:general_functions} for any $f:\{\pm 1\}^n\to\mathbb{R}$
and $k$ it holds
\begin{align}
    \INAL(f,\sigma) &= \sum_T \hat f(T)^2 \INAL(M_T,\sigma)
    \ge W^k(f)\INAL(M_k,\sigma)\;.
\end{align}
Accordingly, if $\INAL(M_k,\sigma)=\Omega(n^{-k_0})$, we have
\begin{align}
    W^{k}(f_n)\le\INAL(f_n, \sigma)\cdot O(n^{k_0})\;,
\end{align}
and then, under our assumptions, also
\begin{align}\label{eq:01}
    W^{\le k}(f_n)\le\INAL(f_n,\sigma)\cdot O(n^{k_0})\;.
\end{align}
For the ``in particular'' statement, let $(f_n)$ be a function family
with negligible $\INAL(f_n,\sigma)$ for a correlating $\sigma$.
Let $k\in\mathbb{N}$. Since $\sigma$ is correlating,
the assumption $\INAL(M_{k'},\sigma)=\Omega(n^{-k_0})$ for
$k'=0,\ldots,k$ holds. Therefore,~\eqref{eq:01} also holds
and $W^{\le k}(f)$ is negligible. Since $k$ was arbitrary,
the function family $(f_n)$ is high-degree.

\section{Details and Proof of Corollary \ref{cor:learning}}
\label{app:fully-connected}

\cref{cor:learning} states a hardness results for learning on
\emph{fully connected neural networks with iid initialization}.
This is a more specific definition than the one we gave for
a neural network in \cref{sec:definitions}. Let us
state it precisely, following the treatment
in~\cite{abbe2020poly}.

\begin{definition}
\label{def:nn-detailed}
For the purposes of \cref{cor:learning}, a neural network on
$n$ inputs consists
of a differentiable activation function $\sigma:\mathbb{R}\to\mathbb{R}$,
a threshold function $f:\mathbb{R}\to\{\pm 1\}$ and a weighted,
directed graph with a vertex set labeled with
$\{1,x_1,\ldots,x_n,v_1,\ldots,v_m,v_{\mathrm{out}}\}$.
The vertices labeled with $x_1,\ldots,x_n$ are called the input
vertices, the vertex labeled with $1$ is the constant vertex
and $v_{\mathrm{out}}$ is the output vertex.

We assume that the graph does not contain loops,
the constant and input vertices do not have any incoming
edges, the output vertex does not have outgoing edges
and for the remaining vertices there are no edges
$(v_i,v_j)$ for $i>j$. Each vertex (a neuron) has an associated
function (the output of the neuron)
from $\mathbb{R}^n$ to $\mathbb{R}$ which
is defined recursively as follows:
The output of the constant vertex is $y_1 = 1$ and the output of the
input vertex is (abusing notation) $y_{x_i} = x_i$.
The output of any other vertex $v_i$ is given by
$y_{v_i}=\sigma(\sum_{v:(v,v_i)\in E(G)} w_{v,v_i} y_v)$.
Finally, the output of the whole network is given by
$f(y_{v_{\mathrm{out}}})$.

We say that the neural network is \emph{fully connected}
if every vertex that has an incoming edge from an input vertex
has incoming edges from all input vertices.
\end{definition}

Note that our definition of ``fully connected network''
covers any feed-forward
architecture that consists of a number of fully connected
hidden layers stacked on top of each other.

Let us restate Theorem~3 from~\cite{abbe2020poly}
with the
bound\footnote{Since we are discussing GD, we are applying
their bound with infinite sample size
$m=\infty$.}
from their Corrolary~1 applied to
the junk flow term JF$_T$:
\begin{theorem}[\cite{abbe2020poly}]
\label{thm:abbe-sandon}
Let $P_{\mathcal{F}}$ be a distribution on Boolean functions
$f:\{\pm 1\}^n\to\{\pm 1\}$.
Consider any neural network
as defined in \cref{def:nn-detailed} with $E$ edges.
Assume that a function $f$ is chosen from $P_\mathcal{F}$
and then $T$ steps of noisy GD
with learning rate $\gamma$,
overflow range $A$ and noise level $\tau$ are run
on the initial network using function $f$ and uniform
input distribution $\mathcal{U}^n$.

Then, in expectation over the initial choice of $f$,
the training noise, and a fresh sample $x\sim\mathcal{U}^n$,
the trained neural network $\NN^{(T)}$ satisfies
\begin{align}
    \Pr
    \big[\NN^{(T)}(x) = f(x)\big]
    &\le \frac{1}{2}+\frac{\gamma T\sqrt{E}A}{\tau}
    \cdot \CP(P_{\mathcal{F}},\mathcal{U}^n)^{1/4}\;.
\end{align}
\end{theorem}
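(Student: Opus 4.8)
Theorem~\ref{thm:abbe-sandon} is a verbatim restatement of the lower bound of~\cite{abbe2020poly} (their Theorem~3, together with the infinite-sample junk-flow estimate of their Corollary~1), so the plan is to invoke it directly; for completeness, here is a sketch of the reasoning. View one run of noisy GD as producing a random trajectory $\theta^0,\theta^1,\dots,\theta^T$, where $\theta^{t+1}$ equals $\theta^t$ minus $\gamma$ times the coordinate-wise clip to $[-A,A]$ of $\nabla_\theta L(\theta^t)+Z^t$, with $Z^t\sim\mathcal{N}(0,\tau^2 I_E)$ fresh at each step and $L$ the population loss against $f$ under $\cU^n$. The structural point is that, because $f$ is $\{\pm1\}$-valued, the loss derivative is affine in $f(x)$, so the population gradient splits at every $\theta$ as $\nabla_\theta L(\theta)=a(\theta)+\E_{x\sim\cU^n}[f(x)\,h_\theta(x)]$ with $a(\theta)\in\bR^E$ and $h_\theta\colon\{\pm1\}^n\to\bR^E$ \emph{target-independent} (built from the activation derivatives and $\nabla_\theta\NN_\theta$). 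Thus $f$ enters the entire dynamics only through the scalar ``query answers'' $\langle f,(h_{\theta^t})_e\rangle$ over the $T$ steps and $E$ edges.

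One then compares, via an information-theoretic coupling, the law of the trajectory when the target is a random $f\sim P_\cF$ against its law when $f$ is replaced by the target-independent reference $\bar f:=\E_{f'\sim P_\cF}[f']$ (same activation, input distribution, and noise sequence), coupling so that parameters and noise agree until the two runs separate. Shifting a Gaussian of standard deviation $\tau$ by $\delta$ and applying the clip afterwards (a $1$-Lipschitz post-processing, which only decreases divergence) costs $O(\delta/\tau)$ in total variation; summing the per-step, per-edge contributions (via the KL chain rule and Pinsker) bounds the total variation between the two trajectory laws by $O\!\big(\tfrac{\gamma}{\tau}\sum_{t<T}\sum_{e}\min\{A,|\langle f-\bar f,(h_{\theta^t})_e\rangle|\}\big)$ up to the usual care about the clip. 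Now two facts are combined: trivially each term is at most $A$ (the overflow cap), which is responsible for the $\sqrt{E}\,A$ scaling; and, on average over the random target, $\E_{f\sim P_\cF}\langle f,g\rangle^2\le\|\E_f[f\otimes f]\|_{\mathrm{op}}\,\|g\|_2^2\le\|\E_f[f\otimes f]\|_{\mathrm{F}}\,\|g\|_2^2=\sqrt{\CP(P_\cF,\cU^n)}\,\|g\|_2^2$, and a further Cauchy--Schwarz step ($\E_f|\langle f,g\rangle|\le(\E_f\langle f,g\rangle^2)^{1/2}$) produces the $\CP^{1/4}$ appearing in the statement. Balancing the two estimates over the $ET$ step/edge pairs yields $\mathrm{TV}=O\!\big(\tfrac{\gamma T\sqrt{E}A}{\tau}\CP(P_\cF,\cU^n)^{1/4}\big)$.

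To finish: under the reference target $\bar f$ the trajectory, hence the thresholded output $\NN^{(T)}$, is independent of $f$, so in that experiment $\E[\NN^{(T)}(x)f(x)]=\langle\E[\NN^{(T)}],\bar f\rangle=O(\CP^{1/4})$, using $\|\NN^{(T)}\|_\infty\le1$ and $\|\bar f\|_2^2=\E_{f,f'}\langle f,f'\rangle\le\sqrt{\CP(P_\cF,\cU^n)}$ (Jensen, since $\CP=\E_{f,f'}\langle f,f'\rangle^2\ge(\E_{f,f'}\langle f,f'\rangle)^2$). Transferring through the total-variation bound keeps $\E[\NN^{(T)}(x)f(x)]=O(\tfrac{\gamma T\sqrt{E}A}{\tau}\CP(P_\cF,\cU^n)^{1/4})$ in the true experiment, and $\pr[\NN^{(T)}(x)=f(x)]=\tfrac12+\tfrac12\E[\NN^{(T)}(x)f(x)]$ gives the stated inequality (the reference tracks the implicit constant $1$). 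The main obstacle --- which is the entire technical content of~\cite{abbe2020poly}, and which I would cite rather than reprove --- is making the coupling rigorous with the nonlinear clip in place: once the two coupled runs diverge the vectors $h_{\theta^t}$ are no longer the same in them, so the ``queries'' are not all evaluated at a common parameter, and one has to account carefully for the accumulated divergence; this is exactly what their ``junk flow'' bookkeeping does.
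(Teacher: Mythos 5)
Your proposal is correct and matches the paper's treatment: Theorem~\ref{thm:abbe-sandon} is stated in the paper purely as an imported result (Theorem~3 of~\cite{abbe2020poly} combined with the junk-flow bound of their Corollary~1 at infinite sample size), with no proof given, and you likewise invoke it by citation. The accompanying sketch of the coupling/cross-predictability argument is a reasonable account of how the cited proof works but is not something the paper itself supplies or requires.
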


Finally, we need to discuss the fact that \cref{cor:learning}
applies for any fully connected neural network
\emph{with iid initialization}. What we mean by this
is that the initial neural network has a fixed activation
$\sigma$, threshold function $f$ and graph (vertices and edges),
but the weights on edges are not fixed. Instead, they are chosen
randomly iid from any fixed probability distribution.
More precisely, we can make a weaker assumption
that the weights on all edges that are outgoing
from the input vertices are
chosen\footnote{
Even more precisely, we can assume only that the distribution
of these weights is symmetric under permutations of
input vertices $x_1,\ldots,x_n$.}
iid from a fixed distribution and all the other weights
have arbitrary fixed values.

We can now proceed to prove \cref{cor:learning}.

\subsection{Proof of Corollary \ref{cor:learning}}

Let a randomly initialized, fully connected neural network
$\NN$ be trained in the following
way. First, a function $\overline{f_n}\circ \pi$ is chosen
uniformly at random from the orbit of $\overline{f_n}$.
Then, a noisy GD algorithm is run with the parameters stated:
$T$ steps, learning rate $\gamma$, overflow range $A$ and
noise level $\tau$.
Finally, a fresh sample $x\sim\mathcal{U}^N$ is presented
to the trained neural network. Then, \cref{thm:abbe-sandon}
says that
\begin{align}
    \Pr\big[\NN^{(T)}(x)=(\overline{f_n}\circ\pi)(x)\big]
    \le\frac{1}{2}+\frac{\gamma T\sqrt{E}A}{\tau}
    \cdot\CP(\orb(\overline{f_n}),\mathcal{U}^N)^{1/4}\;.
\end{align}
Since we can apply \cref{thm:abbe-sandon} to the class
of all orbits of $-\overline{f_n}$, which has the same
cross-predictability, the same upper bound also holds
for $\Pr[\NN^{(T)}(x)\ne (\overline{f_n}\circ\pi)(x)]$.
Consequently, we have the expectation bound
\begin{align}\label{eq:19}
    \Big|\E\langle \NN^{(T)}, \overline{f_n}\circ\pi\rangle\Big|
    \le\frac{2\gamma T\sqrt{E}A}{\tau}
    \cdot\CP(\orb(\overline{f_n}),\mathcal{U}^N)^{1/4}\;.
\end{align}

Recall that the neural network
is fully connected and the weights on the edges outgoing
from the input vertices are iid. The expectation
in~\eqref{eq:19} is an average of conditional expectations
for different initial choices of permutation $\pi$.
Consider the action induced by $\pi$ on the weights outgoing
from the input vertices. By properties of GD, it follows
that each conditional expectation over $\pi$ contributes
equally to the left-hand side of~\eqref{eq:19}. It follows
that the same bound holds also for the single function
$\overline{f_n}$:
\begin{align}\label{eq:05}
    \left|\E_{\NN^{(T)}}\langle\NN^{(T)},\overline{f_n}\rangle\right|
    \le\frac{2\gamma T\sqrt{E}A}{\tau}\CP(\orb(\overline{f_n}),\cU^N)^{1/4}\;.
\end{align}

Accordingly, if $\INAL(f_n,\sigma)$ is negligible, then, by
\Cref{thm:mainthm}, $\CPMacro$ is negligible and the right-hand side of~\eqref{eq:05}
remains negligible for any polynomial bounds on
$\gamma$, $T$, $E$, $A$ and $\tau$, as claimed.

For the more precise statement,
if $\INAL(f_n, \sigma)=O(n^{-c})$,
then again by \Cref{thm:mainthm} it holds
$\CPMacro=O(n^{-\frac{\epsilon}{1+\epsilon}(c-1)})$ and
we get the bound of
$O\left(\frac{\gamma T\sqrt{E}A}{\tau}\cdot
n^{-\frac{\epsilon}{4(1+\epsilon)}(c-1)}
\right)$ on the right-hand side of~\eqref{eq:05}.
\hfill\qedsymbol

\end{document}